\crefname{hypothesis}{Hypothesis}{Hypotheses}
\def\eqref#1{equation~\ref{#1}}
\def\1{\bm{1}}
\def\rvg{{\mathbf{g}}}
\def\rvx{{\mathbf{x}}}
\DeclareMathAlphabet{\mathsfit}{\encodingdefault}{\sfdefault}{m}{sl}
\SetMathAlphabet{\mathsfit}{bold}{\encodingdefault}{\sfdefault}{bx}{n}
\DeclareMathOperator*{\argmax}{arg\,max}
\newtheorem{assumption}[theorem]{Assumption}
\begin{document}

\newcommand\relatedversion{}

\title{Towards Reliability of Parameter-Free Optimization}
\author{
Yijiang Pang\thanks{Michigan State University, East Lansing, USA}
\qquad  
Shuyang Yu\footnotemark[1] 
\qquad 
Bao Hoang\footnotemark[1] 
\qquad 
Jiayu Zhou\thanks{University of Michigan, Ann Arbor, USA. Corresponding author: \texttt{jiayuz@umich.edu}}
}

\date{}

\maketitle


\fancyfoot[R]{\scriptsize{Copyright \textcopyright\ 2026 by SIAM\\
Unauthorized reproduction of this article is prohibited}}





\begin{abstract} Hyperparameter tuning, particularly the selection of an appropriate learning rate in adaptive gradient training methods, remains a challenge. To address this challenge, we propose a novel parameter-free optimizer, \textsc{AdamG} (Adam with the Golden step size), designed to automatically adapt to diverse optimization problems without task-specific learning-rate tuning. The core technique underlying \textsc{AdamG} is our golden step size derived for the AdaGrad-Norm algorithm, which is expected to help AdaGrad-Norm preserve tuning-free convergence and approximate the optimal step size in expectation across diverse optimization scenarios. To better evaluate tuning-free performance, we propose a novel evaluation criterion, \textit{reliability}, to comprehensively assess the efficacy of parameter-free optimizers in addition to classical performance criteria. Empirical results demonstrate that \textsc{AdamG} outperforms other parameter-free baselines and consistently performs on par with Adam using a manually tuned learning rate across various optimization tasks.
\end{abstract}

\section{Introduction}
\label{sec_introduction}

Optimization serves as a foundational technique underpinning modern deep learning, with applications in various domains such as computer vision, AI for science, and natural language processing~\citep{voulodimos2018deep, redmon2016you, paul2021artificial,devlin2018bert, radford2019language}.
Common optimization approaches include momentum-based methods~\citep{sutskever2013importance} and Adam~\citep{kingma2014adam}.
Among them, 
adaptive gradient methods play an important role~\citep{duchi2011adaptive, kingma2014adam, liu2023sophia} due to their attractive performance across diverse problem structures, encompassing deep model architectures, data characteristics, and training hyperparameters.
Hyperparameter tuning associated with those optimization algorithms has a significant impact on practical performance~\citep{wilson2017marginal}. 
%
In particular, learning-rate (LR) tuning is important because the appropriate LR for popular adaptive-gradient methods depends on unknown problem properties, such as smoothness, gradient-estimation error, and the initial optimality gap. This dependence necessitates careful LR selection. Manual tuning is common, but it requires substantial computational resources and can be prohibitive in large-scale machine-learning tasks.

%
%
Recently, there has been a growing interest in parameter-free training methods due to their practical training efficiency and satisfactory performance.
These methods are designed to eliminate the need for manual parameter tuning, achieving performance levels close to the best manually tuned training methods.
Pioneering works in the realm of parameter-free training, incorporating mechanisms like subroutine-based and bisection-based mechanisms~\citep{nesterov2015universal, carmon2022making}, are prohibitively expensive in the context of large-scale deep learning problems.
This study directs its focus toward identifying parameter-free training methods that maintain comparable training costs to the most standard training algorithms, such as Adam~\citep{kingma2014adam}, for deep learning problems.

Current parameter-free training methods commonly incorporate the initial optimality gap into the step size or draw inspiration from the recently proposed DoG method DoG~\citep{ivgi2023dog}, which combines the classical results from AdaGrad-Norm step size and standard Gradient Descent (GD) step size~\citep{duchi2011adaptive, ward2020adagrad, khaled2023dowg}.
Existing approaches based on classical evaluation criteria demonstrate advantages in specific scenarios~\citep{ivgi2023dog, defazio2023learning, khaled2023dowg, mishchenko2023prodigy}.
However, we observed that the state-of-the-art parameter-free optimizers exhibit unstable performance when confronted with diverse optimization problems, i.e., prior methods sometimes perform much worse than the best manually tuned optimizer for some optimization tasks. This observation is supported by the experimental results in Section~\ref{subsec_comp}, 
where a default choice like Adam(1e-3) with a cosine-decay learning-rate schedule outperformed existing parameter-free optimizers in certain optimization scenarios. This prompts the following question:
\textit{
How consistently can a parameter-free optimizer achieve performance ``close'' to that of the best manually tuned optimizer across diverse optimization scenarios?
}

%
%

To tackle this problem, we first explore how to systematically evaluate the effectiveness of parameter-free optimizers.
Existing approaches mainly use the classical evaluation criteria, including \textit{convergence speed} and \textit{solution quality}~\citep{kingma2014adam, liu2023sophia} for optimizers. 
However, in the context of parameter-free optimizers, limiting evaluation to these two aspects has hindered researchers and engineers from confidently applying these optimizers to more complicated real-world tasks.
Given that a parameter-free optimizer is inherently expected to generalize to unseen optimization problems, it is critical to evaluate whether it performs consistently across a spectrum of optimization problems. 
%
%
To this end, we introduce an additional novel evaluation criterion, \textit{reliability}, for parameter-free training methods. This criterion evaluates whether a parameter-free optimizer consistently achieves performance close to that of the best manually tuned optimizer across various optimization problems. 


In this paper, we design a novel algorithm
that leverages AdaGrad-Norm's tuning-free convergence~\citep{duchi2011adaptive, ward2020adagrad, mcmahan2010adaptive, wang2023convergence}. 
Specifically, we formally define a golden step size for AdaGrad-Norm, drawing insights to preserve the ability of tuning-free convergence and approximate the optimal step size in expectation across various optimization problems. 
Subsequently, we derive the golden step size, which is independent of problem-specific properties, and integrate it into AdaGrad-Norm, resulting in our first parameter-free optimizer (Algorithm~\ref{alg_god}).
By deeply integrating the derived golden step size into Adam, we further introduce an Adam-like parameter-free method named \textsc{AdamG} (Algorithm~\ref{alg_adamg}).
Compared to existing parameter-free optimization methods, our proposed \textsc{AdamG} consistently outperforms all the baselines across various optimization tasks and achieves performance that closely aligns with the performance of the best manually tuned Adam configuration.

We highlight the following contributions of the paper:
\begin{itemize}[noitemsep, leftmargin=1cm, topsep=0pt]
    \item We introduce a novel evaluation criterion, namely \emph{reliability}, for assessing parameter-free training methods. Empirical results show that this criterion reasonably validates the adaptability of parameter-free optimizers to diverse optimization problems.
    \item Based on our analysis of the classical AdaGrad-Norm algorithm, we propose the golden step size for AdaGrad-Norm, which is expected to preserve the ability of tuning-free convergence and approximate the optimal step size in expectation across various optimization problems, resulting in a parameter-free variant of AdaGrad-Norm. Furthermore, we extend this concept to devise an Adam-like parameter-free method named \textsc{AdamG}.
    \item Extensive experiments conducted on deep learning tasks reveal that \textsc{AdamG} exhibits stable performance across a spectrum of optimization problems. Moreover, it closely aligns with the best performance achieved by manually tuning Adam.
\end{itemize}

\section{Related Work}
Adaptive gradient methods have emerged as widely used choices for training deep-learning models~\citep{kingma2014adam, balles2018dissecting, zhuang2020adabelief}. Concurrently, adaptive parameter-free approaches have gained popularity in the optimization landscape~\citep{ivgi2023dog, defazio2023learning, khaled2023dowg, mishchenko2023prodigy}.

Search is a natural way to achieve parameter-free optimization. As mentioned earlier, several works employ search mechanisms~\citep{nesterov2015universal, feurer2019hyperparameter, carmon2022making}.
Pioneering efforts without search mechanisms usually estimate the problem properties and are more concerned with guarantees for convex optimization problems.
For instance, the Polyak step size schedule incorporates $f(\rvx_{k}) - f^{\star}$ into gradient descent for convex optimization problems~\citep{polyak1987introduction}.
The subsequent adaptations of this approach demonstrate fair performance in handling nonconvex problems~\citep{loizou2021stochastic, malitsky2019adaptive, latafat2023adaptive}.
Contrary to problem properties estimation, approaches adapted from online learning with theoretical guarantees, such as coin betting schemes, have been applied in deep learning optimization problems~\citep{orabona2016coin, orabona2017training, chen2022better}.
A more recent trend involves the utilization of the initial optimality gap and the sum of squared gradient norms along training trajectory (over $K$ steps), $\frac{\max_{i\leq K}||\rvx_{i} - \rvx_{0}||\xrightarrow[]{\text{Approx.}}||\rvx_{0} - \rvx^{\star}||}{\sqrt{\sum_{i=1}^{K}||\rvg_{i}||^{2}}}$ where $\rvx$ denotes parameters and $\rvg$ denotes stochastic gradient, as a means to adapt to unknown properties associated with gradient norms, smoothness, and strong convexity~\citep{ivgi2023dog}. 
The distance term $||\rvx_{i} - \rvx_{0}||$ primarily draws from classical results using gradient descent for the convex problems, while the gradient norm $1/{\sqrt{\sum_{i=1}^{K}||\rvg_{i}||^{2}}}$ is inspired by AdaGrad~\citep{bubeck2015convex, nesterov2018lectures, duchi2011adaptive}. However, the combination itself lacks convincing theoretical guarantees over nonconvex problems.
Several works following this line of thought propose variants of the distance measure or integrate these techniques with Adam
~\citep{defazio2023learning, khaled2023dowg, mishchenko2023prodigy}.
\section{Method}
\label{sec_analysis}
%
In Section~\ref{subsec_gold_ada}, we start by analyzing and discussing the selection of LR that preserves AdaGrad-Norm's tuning-free convergence~\citep{duchi2011adaptive, mcmahan2010adaptive, wang2023convergence, faw2023beyond}.
Then, incorporating the ability with the classical result of the descent lemma of smooth function, coupled with our idea about optimizing the solution across various optimization problems, we formulate and derive the corresponding solution for the \textit{golden step size} of
AdaGrad-Norm. This golden step size is expected to help AdaGrad-Norm \textit{converge without tuning and approximate the optimal step size over various settings}.
Finally, we discuss the scale-free property of the golden step size.
These insights serve as the foundational principles for the development of our parameter-free optimizers, detailed in Section~\ref{subsec_alg1} and Section~\ref{subsec_alg2}.

\subsection{Preliminary}

We consider a differentiable, possibly non-convex function $f(\cdot):\mathbb{R}^{d}\rightarrow  \mathbb{R}$ with the standard Euclidean norm $||\cdot||$. 
We follow the standard assumptions on the objective function and stochastic gradients used in~\cite{wang2023convergence}.

\begin{assumption}[$L$-smooth condition]
\label{ass_lsmooth}
    We assume that for any model parameters $\rvx_{1},\rvx_{2}$, $f$ is differentiable and has an $L$-Lipschitz gradient such that $
    ||\nabla f(\rvx_{1}) - \nabla f(\rvx_{2})|| \leq 
    L||\rvx_{1} - \rvx_{2}||.
    $
\end{assumption}
\begin{assumption}[Affine noise variance]
\label{ass_l0l1}
    We assume that there exist positive constants $D_{0}$ and $D_{1}$ such that $\mathbb{E}_{\mathcal{F}_{k}}[||\rvg_{k}||^{2}] \leq D_{0} + D_{1}||\nabla f(\rvx_{k})||^{2},\  \forall k \geq 1$. $\mathcal{F}_{k} = \sigma(g_{k-1}, \cdots, g_{1})$ is the standard stochastic operator and stands for the sigma field of historical gradients up to $k-1$. 
\end{assumption}

\subsection{Golden Step Size for AdaGrad-Norm}
\label{subsec_gold_ada}
AdaGrad-Norm converges when optimizing non-convex objectives under affine noise variance and bounded smoothness assumptions~\citep{wang2023convergence, faw2023beyond}. 
Additionally, It enjoys tuning-free convergence, wherein differences in initial learning rates solely impact practical convergence speed rather than the final convergence. 
This attribute is considered a primary advantage inherited by subsequent variants.
We initiate our analysis with the following corollary, which serves as the foundation of analyzing the preservation of the tuning-free convergence ability of AdaGrad-Norm (cf. Algorithm~\ref{alg_god}).
\begin{corollary}[A variant of Thm.~2 in \cite{wang2023convergence}]
\label{corollary_convergence}
Given Assumptions~\ref{ass_lsmooth} and~\ref{ass_l0l1}, for AdaGrad-Norm with any learning rate $\eta >0$, the following holds in expectation:
$
\min_{k\in [K]} ||\nabla f(\rvx_{k})||^{2} \leq \frac{1}{\mathcal{O}(\sqrt{v_{K}})}\bigg(\frac{4}{\eta}(f(\rvx_{1}) - f^{\star}) + 2D_{1}\xi(0)
 + \left(2(L\eta D_{1})^{2} + D_{1}(L\eta)^{2} + \frac{1}{2}D_{0}\right)\frac{4}{\sqrt{v_{0}}} + 2L\eta\ln v_{K}\bigg),
$
where $K$ denotes the total number of steps, and $v_{K}$ is accumulated sum of the squared gradient norm (see Algorithm~\ref{alg_god}).
\end{corollary}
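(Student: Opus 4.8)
The plan is to read off Corollary~\ref{corollary_convergence} from Theorem~2 of \cite{wang2023convergence}, whose hypotheses are exactly Assumptions~\ref{ass_lsmooth} and~\ref{ass_l0l1}, so that no self-contained re-proof of AdaGrad-Norm convergence is needed. First I would recall the mechanism of that theorem so the shape of the bound is transparent: apply the descent lemma for $L$-smooth $f$ to the AdaGrad-Norm update $\rvx_{k+1}=\rvx_k-\tfrac{\eta}{\sqrt{v_k}}\rvg_k$, take conditional expectations, and substitute the affine bound $\mathbb{E}_{\mathcal{F}_k}[\|\rvg_k\|^2]\le D_0+D_1\|\nabla f(\rvx_k)\|^2$; summing over $k\in[K]$, telescoping via $f(\rvx_1)-f^\star\ge f(\rvx_1)-f(\rvx_{K+1})$, and using $\sum_k \|\rvg_k\|^2/v_k\le \ln(v_K/v_0)$ produces (i) the optimality-gap contribution $\propto (f(\rvx_1)-f^\star)/\eta$, (ii) the $D_0$ noise contribution, (iii) the smoothness cross-terms carrying $L\eta$ and $L\eta D_1$, and (iv) the $\ln v_K$ contribution. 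The constant $\xi(0)$ is the initialization of the auxiliary potential used in that analysis and enters the bound additively, weighted by $D_1$ because it accounts for the $D_1\|\nabla f\|^2$-proportional portion of the noise.

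Second, I would carry out the two elementary reductions that turn the trajectory estimate into the stated $\min$-bound. Since $\min_{k\in[K]}\|\nabla f(\rvx_k)\|^2$ is no larger than the appropriately weighted average of $\{\|\nabla f(\rvx_k)\|^2\}_{k\in[K]}$ with the (positive) weights arising in the proof of Theorem~2, and since the aggregate weight there is $\Theta(\sqrt{v_K})$ — this is precisely the step at which the randomness of $v_K$ is handled, via the comparison of $\mathbb{E}[1/\sqrt{v_K}]$ with $1/\mathbb{E}[\sqrt{v_K}]$ used in the source — dividing through yields the prefactor $1/\mathcal{O}(\sqrt{v_K})$. I would reuse the source's treatment of this randomness verbatim, so the only genuinely new choice here is to report $\min_k$ rather than the weighted average, after which every remaining term already matches its counterpart in Theorem~2 up to absolute constants.

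Third comes pure bookkeeping: rescaling the four terms by a common factor so the right-hand side reads $\tfrac{4}{\eta}(f(\rvx_1)-f^\star)+2D_1\xi(0)+\big(2(L\eta D_1)^2+D_1(L\eta)^2+\tfrac12 D_0\big)\tfrac{4}{\sqrt{v_0}}+2L\eta\ln v_K$; here the $\tfrac{4}{\sqrt{v_0}}$ collects every term damped by the initial accumulator, with $D_1(L\eta)^2$ and $(L\eta D_1)^2$ the surviving $D_1\|\nabla f\|^2$-part of the noise after moving the $\|\nabla f\|^2$-term to the left-hand side and $\tfrac12 D_0$ its surviving $D_0$-part. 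I expect the main obstacle to be exactly this matching of constants and of the precise form of the $\mathcal{O}(\sqrt{v_K})$ denominator to what \cite{wang2023convergence} proves, together with invoking $\xi(0)$ with the same definition; if the source theorem carries extra logarithmic factors, or is stated for $\sum_k$ or for a random index rather than for $\min_k$, these are harmlessly absorbed into the $\mathcal{O}(\cdot)$, since Corollary~\ref{corollary_convergence} is used downstream only to expose the $\eta$-dependence $\tfrac{4}{\eta}(f(\rvx_1)-f^\star)+2L\eta\ln v_K+\mathcal{O}(\eta^2)$ when the golden step size is derived.
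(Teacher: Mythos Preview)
Your plan is correct and matches the paper's argument in spirit: both start from the trajectory inequality of Theorem~2 in \cite{wang2023convergence}, then reduce the weighted sum $\sum_k \mathbb{E}[\|\nabla f(\rvx_k)\|^2/\sqrt{v_{k-1}}]$ to a $\min_{k\in[K]}$ bound and do constant bookkeeping.

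One small point of difference worth noting: you frame the reduction as ``$\min\le$ weighted average, aggregate weight $\Theta(\sqrt{v_K})$''. The paper does not compute $\sum_k 1/\sqrt{v_{k-1}}$ directly (which carries a problem-dependent constant); instead it uses the monotonicity $v_{k-1}\le v_K$ to lower-bound the weighted sum by $\mathbb{E}\big[(\sum_k\|\nabla f(\rvx_k)\|^2)/\sqrt{v_K}\big]$, then applies Cauchy--Schwarz in the form $\mathbb{E}[X^2/Y]\ge (\mathbb{E} X)^2/\mathbb{E} Y$ with $X=\sqrt{\sum_k\|\nabla f(\rvx_k)\|^2}$ and $Y=\sqrt{v_K}$, and finally invokes $\mathbb{E}[\sqrt{v_K}]=\mathcal{O}(\sqrt{K})$ from the source. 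Your route and theirs both arrive at the $1/\mathcal{O}(\sqrt{v_K})$ prefactor, but the paper's two-step lower bound sidesteps any need to estimate the aggregate weight and makes the dependence on $K$ versus $v_K$ explicit. Since you already say you will reuse the source's treatment of the randomness verbatim, this is a cosmetic rather than a substantive gap.
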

The proof is presented in Appendix~\ref{append_section_tech}.
Consider the right-hand side of Cor.~\ref{corollary_convergence} as a function of $\eta$:
$
    h(\eta) := \frac{1}{\sqrt{v_{K}}}\left(\gamma_{1}\eta^{2} + \gamma_{2}\eta\ln{v_{K}} + \frac{\gamma_{3}}{\eta}\right),
$
where $\gamma_{1}, \gamma_{2},$ and $\gamma_{3}$ denote the corresponding problem-dependent values for simplification purpose.
We note that the accumulated gradient norm $v_{K}$ increases; therefore, $h(\eta)$ must decrease as $v_K$ increases to achieve tuning-free convergence. 
We discuss two possible cases of $\eta$ preserving the tuning-free convergence ability:
\begin{itemize}[noitemsep, leftmargin=1cm, topsep=0pt]
    \item Case 1: Suppose that $\eta$ is a constant value.
    \item Case 2: Suppose that $\eta$ is constant for a fixed $K$ but varies with $K$, and one possible solution is $\eta = (v_{K})^q$, where $0<q<\frac{1}{4}$ and $v_{K} > 1$, so $h\left(\eta\right) = \gamma_{1}(v_{K})^{2q-\frac{1}{2}} + \gamma_{2}(v_{K})^{q-\frac{1}{2}}\ln{v_{K}} + \gamma_{3}(v_{K})^{-q-\frac{1}{2}}$ is continually decreasing with the increasing of $v_{K}$.
\end{itemize}

Meanwhile, the following general update inequality follows directly from the descent lemma for smooth functions:
$
f(\rvx_{k+1}) \leq f(\rvx_{k}) - \eta_{k}\nabla f(\rvx_{k})\rvg_{k} + \eta^{2}_{k}L||\rvg_{k}||^{2}.
$
Since the right-hand side of the descent lemma forms a quadratic function w.r.t $\eta$,
the (worst-case) optimal progressive step-size $\eta^{\text{opt}} = \frac{1}{2}\eta^{\text{div}}$. Here, $\eta^{\text{div}}$ represents the step size that leads to divergence ($\forall \eta > \eta^{\text{div}}$ is a divergent step size).

%
%

Incorporating the concepts of preserving tuning-free convergence and achieving $0.5\times$ diverging step size under various settings,
we formally formulate the \textit{golden step size} of AdaGrad-Norm as
\begin{align}
\label{eq_defination_opt}
&\eta^{\text{gold}} = \frac{1}{2}\argmax_{\eta} \mathbb{E}_{x \in \mathbb{R}^{+}} h(x, \eta)\\
&\text{ s.t.} \lim_{x\rightarrow +\infty} \Big(h(x, \eta) = \frac{1}{\sqrt{x}}\big(\gamma_{1}\eta^{2} + \gamma_{2}\eta\ln{x} + \frac{\gamma_{3}}{\eta}\big)\Big) = 0\nonumber,
\end{align}
where $x:= v_{K}$ for simplification purposes. Here, the expectation over $x \in \mathbb{R}^{+}$ denotes various settings, the constraint $\lim_{x\rightarrow +\infty} h(x, \eta) = 0$ ensures preservation of tuning-free convergence, and $\argmax_{\eta} h(x, \eta)$ approximates the step size that causes the optimization to diverge, i.e., the potentially largest $h(x, \eta)$.
Please also refer to the discussion regarding incorporating the optimal progressive learning rate and the learning rate that converges with the training trajectory in Section~\ref{sec_conclusion}.

\subsection{Solution for The Golden Step Size}
\label{sub_sec_golden_step_size}
We now provide the analytical solution for~\eqref{eq_defination_opt}. 
First, we derive the domain of $\eta$ based on the constraint. Considering the constraint $\lim_{x\rightarrow +\infty} \frac{1}{\sqrt{x}}(\gamma_{1}\eta^{2} + \gamma_{2}\eta\ln{x} + \frac{\gamma_{3}}{\eta}) = 0$, to ensure $\eta$ satisfies the constraint, one straightforward approach is to consider $\lim_{x\rightarrow +\infty} \frac{\gamma_{1}\eta^{2}}{\sqrt{x}} = 0$, $\lim_{x\rightarrow +\infty} \frac{\gamma_{2}\eta \ln x}{\sqrt{x}} = 0$, and $\lim_{x\rightarrow +\infty} \frac{\gamma_{3}}{\eta \sqrt{x}} = 0$. This implies that the domain of $\eta$ is the intersection of subdomains where each sub-component $\{\frac{\gamma_{1}\eta^{2}}{\sqrt{x}}, \frac{\gamma_{2}\eta \ln x}{\sqrt{x}}, \frac{\gamma_{3}}{\eta \sqrt{x}}\}$ achieves 0 simultaneously. In other words, $\mathcal{O}(\eta) = \left(< \mathcal{O}(x^{\frac{1}{4}})\right) \cap \left(< \mathcal{O}(\frac{\sqrt{x}}{\ln x})\right) \cap \left(> \mathcal{O}(x^{-\frac{1}{2}})\right) = \left(< \mathcal{O}(x^{\frac{1}{4}})\right)  \cap \left(> \mathcal{O}(x^{-\frac{1}{2}})\right) = \mathcal{O}(x^t)$ with $t \in (-\frac{1}{2}, \frac{1}{4})$.
%
%
Therefore, we consider two cases as we discussed in Section~\ref{subsec_gold_ada}: when $\eta:= x^{t} \text{ where } t\in (-\frac{1}{2}, \frac{1}{4})$ (Case 2) and when $\eta$ is a constant value (Case 1), which covers the above domain.
We then compare the maximum expected value $\max\mathbb{E}[h(x, \eta)]$ across these two cases.

For $\eta:= x^{t}$ where $t\in (-\frac{1}{2}, \frac{1}{4})$ (Case 2), suppose that $x$ is bounded and uniformly distributed, i.e., $x \sim\mathcal{U}(C_{1}, C_{2})$, where $C_{2}\gg C_{1}>1$, we have
\begin{align*}
&\mathbb{E}_{x \sim \mathcal{U}(C_{1}, C_{2})}\frac{1}{\sqrt{x}}\left(\gamma_{1}\eta^{2} + \gamma_{2}\eta\ln{x} + \frac{\gamma_{3}}{\eta}\right) \\
&= \frac{1}{C_{2} - C_{1}}\int_{C_{1}}^{C_{2}}\frac{1}{\sqrt{x}}(\gamma_{1}x^{2t} + \gamma_{2}x^{t}\ln{x} + \gamma_{3}x^{-t}) dx\\
&\nonumber = \frac{1}{C_{2} - C_{1}}\bigg(\frac{\gamma_{1}}{\frac{1}{2} + 2t}x^{\frac{1}{2} + 2t}  + \frac{\gamma_{2}}{\frac{1}{2} + t}x^{\frac{1}{2} + t}\ln x \nonumber\\
&\qquad - \frac{\gamma_{2}}{(\frac{1}{2} + t)^2}x^{\frac{1}{2} + t} + \frac{\gamma_{3}}{\frac{1}{2} - t}x^{\frac{1}{2} - t}\bigg)\bigg|_{C_{1}}^{C_{2}}\nonumber\\
& \approx \mathcal{O}\left(\frac{C_{2}^{\frac{1}{2} + 2t} - C_{1}^{\frac{1}{2} + 2t}}{C_{2} - C_{1}}\right) \approx \mathcal{O}(C_{2}^{-\frac{1}{2} + 2t}). 
\end{align*}

Since $t\in (-\frac{1}{2}, \frac{1}{4})$, it is straightforward to observe that $\eta = \lim_{t\rightarrow \frac{1}{4}^{-}}x^t$ attains highest expectation value with $\mathbb{E}_{x \sim (C_{1}, C_{2})}[h(x, \eta)] \approx \mathcal{O}(C_{2}^{-\frac{1}{2} + 2\frac{1}{4}}) = \mathcal{O}(1)$.

Given $\eta$ is constant value (Case 1), and supposing $x$ is bounded and uniformly distributed, i.e., $x \sim\mathcal{U}(C_{1}, C_{2})$, where $C_{2}\gg C_{1}>1$, we have
\begin{align*}
&\mathbb{E}_{x \sim \mathcal{U}(C_{1}, C_{2})}\frac{1}{\sqrt{x}}\left(\gamma_{1}\eta^{2} + \gamma_{2}\eta\ln{x} + \frac{\gamma_{3}}{\eta}\right) \nonumber\\
&= \frac{1}{C_{2} - C_{1}}\int_{C_{1}}^{C_{2}}\frac{1}{\sqrt{x}}\left(\gamma_{1}\eta^{2} + \gamma_{2}\eta\ln{x} + \frac{\gamma_{3}}{\eta}\right) dx\nonumber\\
& = \frac{1}{C_{2} - C_{1}}\Big(2(\gamma_{1}\eta^{2} + \frac{\gamma_{3}}{\eta})\sqrt{x}\big|_{C_{1}}^{C_{2}} \\
&\qquad + \gamma_{2}\eta(2\sqrt{x}\ln x - 4\sqrt{x})\big|_{C_{1}}^{C_{2}}\Big) \nonumber\\
& \approx \mathcal{O}\left(\frac{\ln(C_{2}) - \ln(C_{1})}{\sqrt{C_{2} - C_{1}}}\right) \approx \mathcal{O}\left(\frac{\ln(C_{2})}{\sqrt{C_{2}}}\right).
\end{align*}

With $C_2 \gg 1$, it follows that $\eta$ is a constant value attaining highest expectation value with $\mathbb{E}_{x \sim \mathcal{U}(C_{1}, C_{2})}[h(x, \eta)] \approx \mathcal{O}(\frac{\ln(C_{2})}{\sqrt{C_{2}}})$.

Since $\eta^{\text{gold}}$ desires the maximum expectation and $\mathcal{O}_{\text{Case 1}}(\frac{\ln(C_{2})}{\sqrt{C_{2}}}) \ll \mathcal{O}_{\text{Case 2}}(1)$, we conclude that $\eta^{\text{gold}} = \frac{1}{2}\lim_{t\rightarrow \frac{1}{4}^{-}}x^{t}$, where $\rightarrow1/4^{-}$ means that $t$ approaches $1/4$ from below, is the desired golden step size that achieves the maximum expectation under the defined constraint. 


\subsection{Scale-Free Property of Golden Step Size}
\label{subsec_background}
We adopted the definition of the scale-free property of an optimization method from~\cite{khaled2023dowg}, where it is defined as \textit{multiplying $f$ by a constant factor $\alpha > 0$ and minimizing $\alpha f$ does not change the method’s trajectory at all}.
Here, following prior work, we use parameter-free to mean that no task-specific learning-rate tuning is required; such methods may still use fixed global constants. Scale-freeness is a desirable property of these methods~\citep{khaled2023dowg,defazio2023learning,mishchenko2023prodigy}

Taking Theorem~\ref{theorem_ngd} as an example, also appearing in~\cite{khaled2023dowg, yang2024two}, to illustrate the concept, Normalized Gradient Descent (NGD) is inherently scale-free, as rescaling $f$ to $\alpha f$ does not alter the step size trajectory, i.e., $\eta^{\text{opt}} = D_{0}/\sqrt{K}$ remains unchanged before and after rescaling. Meanwhile, if we can approximate $D_{0}$ dynamically, then NGD qualifies as parameter-free.
We summarize the following key takeaways:
(1). An immediate observation regarding scale-free methods is that the derived step size is not correlated with the scale $\alpha$. In terms of the parameter-free methods, the corresponding step size does not depend on terms such as the scale $\alpha$ or problem properties that are unknown or cannot be approximated;
(2). It is important to note that parameter-free does not imply the ability to arbitrarily scale the derived step size.

Particularly, there is an immediate observation that $\eta^{\text{gold}}$ is independent of problem-dependent values $\gamma_{1}, \gamma_{2}$, and $\gamma_{3}$, further reinforcing the notion that rescaling function will not alter its trajectory.

\begin{theorem}[Example adopted from \cite{levy2017online, grimmer2019convergence}]
\label{theorem_ngd}
Suppose that $f$ is convex (bounded below by $f^{\star}:=f(\rvx^{\star})$) and satisfies Assumption~\ref{ass_lsmooth}. If we run NGD $\rvx_{k+1} = \rvx_{k} - \eta\frac{\nabla f(\rvx_k)}{||\nabla f(\rvx_k)||}$, we have
$
\min_{k=0,\cdots,K-1}(f(\rvx_{k}) - f^{\star})\leq \frac{L}{2}(\frac{D_{0}^{2}}{2\eta K} + \frac{\eta}{2})^{2},
$
where $D_{0}:=||\rvx_{0} - \rvx^{\star}||$, and $\eta^{\text{opt}} = \frac{D_{0}}{\sqrt{K}}$.
\end{theorem}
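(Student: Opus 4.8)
The plan is to prove the NGD convergence bound via the standard potential-function (descent-lemma) argument, and then read off the optimal $\eta$ by elementary calculus. Let me set $D_k := \|\rvx_k - \rvx^\star\|$ and track how this distance evolves under the normalized update.

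\textbf{Step 1: One-step distance recursion.} First I would expand $D_{k+1}^2 = \|\rvx_k - \eta \frac{\nabla f(\rvx_k)}{\|\nabla f(\rvx_k)\|} - \rvx^\star\|^2 = D_k^2 - 2\eta \frac{\langle \nabla f(\rvx_k), \rvx_k - \rvx^\star\rangle}{\|\nabla f(\rvx_k)\|} + \eta^2$. By convexity, $\langle \nabla f(\rvx_k), \rvx_k - \rvx^\star\rangle \geq f(\rvx_k) - f^\star \geq 0$, so
\begin{align*}
D_{k+1}^2 \leq D_k^2 - 2\eta \frac{f(\rvx_k) - f^\star}{\|\nabla f(\rvx_k)\|} + \eta^2.
\end{align*}
This is the workhorse inequality.

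\textbf{Step 2: Telescoping.} Summing over $k = 0, \dots, K-1$ and using $D_K^2 \geq 0$, $D_0 = D_0$ gives $\sum_{k=0}^{K-1} \frac{f(\rvx_k) - f^\star}{\|\nabla f(\rvx_k)\|} \leq \frac{D_0^2 + K\eta^2}{2\eta}$. Lower-bounding the left side by $K$ times its minimum term, and denoting by $k^\star$ the index achieving $\min_k (f(\rvx_k) - f^\star)$, we get $\frac{\min_k (f(\rvx_k) - f^\star)}{\|\nabla f(\rvx_{k^\star})\|} \leq \frac{1}{K}\sum_k \frac{f(\rvx_k)-f^\star}{\|\nabla f(\rvx_k)\|} \leq \frac{D_0^2}{2\eta K} + \frac{\eta}{2}$, but one has to be slightly careful: to turn this into a bound on $\min_k (f(\rvx_k) - f^\star)$ itself I need to control $\|\nabla f(\rvx_{k^\star})\|$. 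The clean route is to invoke $L$-smoothness, which gives the standard inequality $\|\nabla f(\rvx)\|^2 \leq 2L(f(\rvx) - f^\star)$ for any $\rvx$ (valid for $L$-smooth functions bounded below by $f^\star$). Applying this at $\rvx_{k^\star}$: $\|\nabla f(\rvx_{k^\star})\| \leq \sqrt{2L(f(\rvx_{k^\star}) - f^\star)}$, so $\frac{f(\rvx_{k^\star}) - f^\star}{\sqrt{2L(f(\rvx_{k^\star})-f^\star)}} = \sqrt{\frac{f(\rvx_{k^\star})-f^\star}{2L}} \leq \frac{D_0^2}{2\eta K} + \frac{\eta}{2}$. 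Squaring and rearranging yields exactly $\min_k (f(\rvx_k) - f^\star) \leq \frac{L}{2}\left(\frac{D_0^2}{2\eta K} + \frac{\eta}{2}\right)^2$ — wait, I should double-check the constant: $\sqrt{(f(\rvx_{k^\star})-f^\star)/(2L)} \leq B$ gives $f(\rvx_{k^\star}) - f^\star \leq 2L B^2$, not $\frac{L}{2}B^2$; reconciling this with the stated $\frac{L}{2}$ factor is the one bookkeeping point I'd need to nail down, likely by tracking a factor of $2$ differently in Step 1 (e.g. the bound may intend $\min (f - f^\star)/\|\nabla f\| \leq \frac{1}{2}(\ldots)$ combined with a sharper smoothness estimate), so I'd align constants at the end.

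\textbf{Step 3: Optimizing $\eta$.} The right-hand side is $\frac{L}{2} g(\eta)^2$ with $g(\eta) = \frac{D_0^2}{2\eta K} + \frac{\eta}{2}$; minimizing the convex positive function $g$ over $\eta > 0$ via $g'(\eta) = -\frac{D_0^2}{2\eta^2 K} + \frac{1}{2} = 0$ gives $\eta^{\text{opt}} = \frac{D_0}{\sqrt{K}}$, at which $g(\eta^{\text{opt}}) = \frac{D_0}{\sqrt{K}}$, confirming the claimed minimizer. The scale-free observation then follows: replacing $f$ by $\alpha f$ leaves the normalized gradient $\nabla f/\|\nabla f\|$ unchanged, hence leaves the trajectory and $\eta^{\text{opt}}$ unchanged.

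\textbf{Main obstacle.} The substantive steps (Steps 1 and 3) are routine; the only real care goes into Step 2 — correctly bridging from the telescoped ratio bound to a bound on the function gap itself using $L$-smoothness, and getting the numerical constant to match the stated $\frac{L}{2}$. I expect this is a matter of bookkeeping conventions (which factors of $2$ sit where in the descent lemma versus the gradient-norm/suboptimality inequality) rather than a genuine difficulty.
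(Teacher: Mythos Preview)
The paper does not actually prove this theorem: it is stated purely as an illustrative example ``adopted from \cite{levy2017online, grimmer2019convergence}'', with no proof in the body or the appendix. Your approach is the standard argument from those references, and it is correct in outline.

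Two remarks. First, in Step~2 there is a small logical slip: from $\frac{1}{K}\sum_k \frac{f(\rvx_k)-f^\star}{\|\nabla f(\rvx_k)\|} \leq B$ you can only conclude that \emph{some} index $j$ satisfies $\frac{f(\rvx_j)-f^\star}{\|\nabla f(\rvx_j)\|} \leq B$, and this $j$ need not coincide with the minimizer $k^\star$ of $f(\rvx_k)-f^\star$. The clean fix is to apply the smoothness bound $\|\nabla f(\rvx_k)\| \leq \sqrt{2L(f(\rvx_k)-f^\star)}$ termwise \emph{before} telescoping, so that each summand is bounded below by $\sqrt{(f(\rvx_k)-f^\star)/(2L)}$; then lower-bounding by $K$ times the minimum is legitimate and lands exactly at $k^\star$. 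Either way one arrives at $\min_k(f(\rvx_k)-f^\star) \leq 2L\bigl(\tfrac{D_0^2}{2\eta K}+\tfrac{\eta}{2}\bigr)^2$.

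Second, your instinct about the constant is correct: the argument genuinely produces $2L$, not $\frac{L}{2}$, and this is consistent with the constants in the cited sources. The $\frac{L}{2}$ in the stated bound appears to be a typo; it is not recoverable by ``aligning factors of $2$.'' Since the theorem is used only to illustrate scale-freeness (namely that $\eta^{\text{opt}}=D_0/\sqrt{K}$ is unchanged under $f\mapsto\alpha f$), the precise constant is immaterial to the paper's point, but you should not expect your bookkeeping to reproduce $\frac{L}{2}$.
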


\subsection{Algorithm: AdaGrad-Norm-Based Parameter-Free Optimizer}
\label{subsec_alg1}

Our analysis shows that when the \textit{golden step size} is used, the updated AdaGrad-Norm algorithm is expected to preserve tuning-free convergence and approximate the optimal step size across various settings.
We hereby propose a novel parameter-free optimization algorithm that integrates the golden step size into AdaGrad-Norm.

Since $\eta^{\text{gold}} = \frac{1}{2}\lim_{t\rightarrow \frac{1}{4}^{-}}x^{t}$, we define a numerator function $s(x): =px^{q}$, where $p \rightarrow 1/2$ and $q\rightarrow 1/4^{-}$, to represent the embedding of the golden step size.
The proposed parameter-free training method, named \textsc{GOG} (Golden step size over Gradients), is summarized in Algorithm~\ref{alg_god}. 
Note that Algorithm~\ref{alg_god} approximates $v_{K}$ by $v_{k}$, and a similar idea is the optimality gap approximation exploited in the baseline method DoG and its variants~\citep{ivgi2023dog}. Because of space limitations, please refer to Appendix~\ref{append_subsec_dynamic_eta} for a discussions of the approximation.

\begin{algorithm}[!t]
   \caption{\textsc{GOG} based on AdaGrad-Norm}
   \label{alg_god}
\begin{algorithmic}
   \STATE {\bfseries Input:} parameter $\rvx_{1}$, $\eta_{k}$ (default 1), objective function $f(\rvx)$, $p, q$
   \STATE Initialize $v_{1} = 0$
   \STATE $s(x) = px^{q}$
   \FOR{$k=1$ {\bfseries to} $K$}
   \STATE $\rvg_{k} \in \partial f(\rvx_{k}, \xi_{k})$
   \STATE $v_{k+1} = v_{k} + ||\rvg_{k}||^{2}$
   \STATE $r_{k+1} = s(v_{k+1})$
   \STATE $\rvx_{k+1} = \rvx_{k} - \eta_{k}\frac{\text{$r_{k+1}$}}{\sqrt{v_{k+1}}}\rvg_{k}$
   \ENDFOR
\end{algorithmic}
\end{algorithm}

\begin{algorithm}[!ht]
   \caption{\textsc{AdamG} based on Adam}
   \label{alg_adamg}
\begin{algorithmic}
   \STATE {\bfseries Input:} parameter $x_{1}$, $\eta_{k}$ (default 1), $p, q$, $\beta_{1}$, $\beta_{2}$, $\beta_{3}$, $\epsilon$. 
   \STATE Initialize $m_{1} = 0, v_{1} = 0$, $r_{1} = 0$
   \STATE  $s(x) = px^{q}$
   \FOR{$k=1$ {\bfseries to} $K$}
   \STATE $g_{k} \in \partial f(x_{k}, \xi_{k})$
   \STATE $v_{k+1} = \beta_{2} v_{k} + (1-\beta_{2}) g_{k}^{2}$
   \STATE $\hat{v}_{k+1} = v_{k+1}/(1 - \beta_{2}^{k})$
   \STATE $r_{k+1} = \beta_{3} r_{k} + (1-\beta_{3})s(v_{k+1})$
   \STATE $m_{k+1} = \beta_{1} m_{k} + (1-\beta_{1}) r_{k+1} g_{k}$
   \STATE $\hat{m}_{k+1} = m_{k+1}/(1 - \beta_{1}^{k})$
   
   \STATE $x_{k+1} = x_{k} - \min(\eta_{k}, 1/\sqrt{k})\frac{\hat{m}_{k+1}}{(\sqrt{\hat{v}_{k+1}} + \epsilon)}$
   \ENDFOR
\end{algorithmic}
\end{algorithm}


\subsection{Algorithm: Adam-Based Parameter-Free Optimizer}
\label{subsec_alg2}

In addition to \textsc{GOG}, we further develop an Adam-like method incorporating the golden step size, leading to a practical parameter-free optimizer with momentum acceleration. 
Similarly, we use a numerator function $s(\cdot)$ to encode the golden step size. 
We then compute an exponential moving average (EMA) of the golden step size as follows:
$
    r_{k+1} = \beta_{3} r_{k} + (1-\beta_{3})s(v_{k+1}),
$
where $\beta_{3} \in [0,1)$ is the exponential decay rate.

In addition, inspired by D-Adapt Adam~\citep{defazio2023learning}, we use EMA golden step size, $r_{k+1}$, in the first-moment estimate instead of the raw coefficient in the parameter update.
Additionally, the term $1/\sqrt{k}$ in Algorithm~\ref{alg_adamg} is a commonly adopted strategy appearing in optimizing stochastic problems against error caused by randomness in gradient estimation~\citep{nesterov2018lectures, ge2015escaping}. 
We treat it as a practical heuristic.
%
The parameter-free optimizer \textsc{AdamG}, which incorporates the golden step size into Adam, is summarized in Algorithm~\ref{alg_adamg}.
Note that $\beta_{1}$, $\beta_{2}$, $\beta_{3}$, and $\epsilon$ in Algorithm~\ref{alg_adamg} have default values of 0.95, 0.999, 0.95, and $10^{-8}$, respectively. 
%


\section{Experiments}
\label{sec_exp}

\subsection{Reliability as an Evaluation Criterion}
\label{subsec_exp_criteria}
Traditional evaluation criteria for parameter-free optimizers primarily focus on convergence speed (e.g., loss curves) and solution quality (e.g., test accuracy comparison) for each task from a set of optimization tasks. These metrics are standard in both classical optimizer studies~\citep{kingma2014adam, liu2023sophia} and recent parameter-free approaches~\citep{ivgi2023dog, defazio2023learning, khaled2023dowg, mishchenko2023prodigy}.
However, these criteria overlook a key aspect of parameter-free optimizers—their ability to consistently perform well across a wide range of tasks. To address this gap, we introduce a novel metric called reliability, designed to more comprehensively assess the effectiveness of parameter-free optimizers.

We begin by categorizing all conducted optimization tasks based on the baseline Adam optimizer with different learning rates: Adam(1e-2), Adam(1e-3), Adam(1e-4), and Adam(1e-5). This classification is supported by both practical and theoretical considerations:
\begin{itemize}
\item \textit{Practical intuition:} Adam with varying learning rates is the de facto baseline for most optimization tasks.
\item \textit{Theoretical intuition:} Many theoretical frameworks demonstrate that a fixed optimization method with a tunable learning rate can adapt to a wide variety of functions and gradient oracle settings.
\end{itemize}
Based on this setup, we formally define the reliability metric as follows:
\begin{definition}[Reliability]
Let $\mathcal{G}=\{10^{-2},10^{-3},10^{-4},10^{-5}\}$ be the
evaluated Adam learning-rate grid. For each task $i$, let $P_i(M)$
denote the mean performance of method $M$ over three seeds, and let
$P_i^\star=\max_{\eta\in\mathcal{G}}P_i(\operatorname{Adam}(\eta))$.
We assign task $i$ to the group $\mathcal{T}_{\eta}$ corresponding to
its best-performing Adam learning rate. For a tolerance $\delta$
measured in absolute performance points, reliability is
\[
R(M;\delta)=\frac{1}{4}\sum_{\eta\in\mathcal{G}}
\frac{1}{|\mathcal{T}_{\eta}|}
\sum_{i\in\mathcal{T}_{\eta}}
\mathbb{I}\!\left[P_i(M)\geq P_i^\star-\delta\right].
\]
Thus, reliability is the macro-average of the four group-wise success
rates rather than the fraction of all individual tasks. We use
$\delta=5$ points as the main setting and also report results for
$\delta\in\{1,10\}$.
\end{definition}

\begin{table*}[!ht]
    \centering
    \small
    \begin{threeparttable}
    \setlength{\tabcolsep}{3pt}
    \begin{tabular*}{\textwidth}{@{\extracolsep{\fill}} l *{10}{c} }
    \toprule
    \multirow{2}{*}{Dataset} & \multirow{2}{*}{Algorithm$^{[a]}$ } & \multicolumn{4}{c}{Epoch 20, pretrained network$^{[b]}$} & \multicolumn{4}{c}{Epoch 100, randomly init. network} \\
    \cmidrule(l{4pt}r{4pt}){3-6}
    \cmidrule(l{4pt}r{12pt}){7-10}
     & & DenseNet & ResNet & ViT-B &  VGG & DenseNet & ResNet & ViT-B &  VGG \\
    \cmidrule(l{0pt}r{12pt}){1-10}
    \multirow{7}{*}{CIFAR-10} & Adam$^{\star}$ &   88.1±0.2 &  92.4±0.4 & 77.3±0.4 & 84.6±0.2 &  79.6±1.1 & 85.3±0.7 &  56.3±0.7 & 77.3±0.5\\
    \cmidrule(l{4pt}r{10pt}){2-10}
    & DoG &  78.4±0.8$^{\times}$ &  88.3±0.7&  63.7±0.7$^{\times}$ & 80.5±1.8 &  62.2±0.2$^{\times}$ &  71.1±0.7$^{\times}$ & 54.8±0.4 & 72.9±0.2\\
    & DoWG &  80.4±0.4$^{\times}$ &  86.4±0.7$^{\times}$ & 67.1±1.2$^{\times}$ & 80.7±1.2 & 53.9±0.5$^{\times}$ &  65.5±0.4$^{\times}$ & 50.8±0.9$^{\times}$ & 52.7±3.1$^{\times}$\\
    & D-Adapt&  88.2±0.1 &  91.6±0.4 & 77.3±1.1& 71.2±10.2$^{\times}$ & 72.3±0.3$^{\times}$ & 83.3±0.3 & 11.3±1.2$^{\times}$ & 49.1±27.7$^{\times}$\\
    & Prodigy &  87.4±0.1 &  90.9±0.5 & 79.5±0.2 & 86.1±0.2 & 64.0±0.6$^{\times}$ & 73.7±0.1$^{\times}$ & 21.1±8.2$^{\times}$ & 75.5±0.6\\
    & \colorbox{gray!30}{\textsc{AdamG}} &  86.1±0.3 & 91.1±0.4 & 78.6±0.4 & 87.3±0.0 & 68.1±0.6$^{\times}$ & 75.9±0.6$^{\times}$ & 58.1±0.3 & 77.4±0.4\\
    \cmidrule(l{0pt}r{12pt}){1-10}
    \multirow{7}{*}{CIFAR-100} & Adam$^{\star}$ &  65.2±0.2 & 72.8±0.8 & 51.1±0.4 & 60.1±0.3 & 47.2±1.5 & 57.5±0.4 & 27.8±0.3 & 33.5±0.5\\
    \cmidrule(l{4pt}r{10pt}){2-10}
    & DoG &   50.6±2.5$^{\times}$ & 69.0±2.7 & 30.7±2.7$^{\times}$ & 56.4±0.2 & 33.6±0.3$^{\times}$ & 47.4±0.7$^{\times}$ & 29.2±0.3 & 31.8±1.2 \\
    & DoWG &  55.7±0.1$^{\times}$ & 65.3±3.1$^{\times}$ & 40.4±1.3$^{\times}$ & 56.2±0.4 & 26.7±0.4$^{\times}$ & 38.1±0.5$^{\times}$ & 24.9±0.3 & 1.0±0.0$^{\times}$ \\
    & D-Adapt & 65.4±0.0 & 71.8±1.0 & 53.6±1.0 & 43.0±5.3$^{\times}$ & 43.7±0.6 & 55.7±0.8 & 1.0±0.1$^{\times}$ & 29.2±0.3 \\
    & Prodigy & 64.4±0.1 & 72.1±0.8 & 55.9±0.3 & 62.4±0.5 & 42.0±0.2$^{\times}$ & 53.7±0.7 & 5.7±1.6$^{\times}$ & 41.2±0.6 \\
    & \colorbox{gray!30}{\textsc{AdamG}} & 62.6±0.2 & 70.4±1.3 & 54.5±0.1 & 63.1±0.1 & 35.4±0.0$^{\times}$ & 44.9±0.4$^{\times}$ & 31.5±0.2 & 42.1±0.4 \\
    \cmidrule(l{0pt}r{12pt}){1-10}
    \multirow{7}{*}{\shortstack{Tiny\\ImageNet}} & Adam$^{\star}$ &  62.9±0.2 & 63.0±0.2 & 57.3±0.4 & 59.6±0.2 & 39.2±0.1 & 50.9±0.5 & 16.4±0.2 & 35.2±1.0\\
    \cmidrule(l{4pt}r{10pt}){2-10}
    & DoG &  61.4±0.3 & 69.1±2.0 & 49.5±0.9$^{\times}$ & 57.4±1.7 & 34.5±0.3$^{\times}$ & 45.5±0.5$^{\times}$ & 14.2±0.3 & 24.9±1.0$^{\times}$\\
    & DoWG &  60.7±0.3 & 61.1±3.9 & 45.3±2.1$^{\times}$ & 57.2±0.1 & 24.4±0.1$^{\times}$ & 28.5±0.4$^{\times}$ & 15.5±0.1 & 7.8±4.5$^{\times}$ \\
    & D-Adapt &  60.2±0.2 & 60.3±0.8 & 64.5±0.3 & 23.1±7.4$^{\times}$ & 36.0±0.1 & 47.3±0.4 & 1.2±0.9$^{\times}$ & 27.4±0.5$^{\times}$ \\
    & Prodigy &  62.0±0.2 & 63.6±1.0 & 63.2±0.3 & 58.8±0.0 & 40.5±0.4 & 53.6±0.4 & 8.1±1.1$^{\times}$ & 33.8±0.1 \\
    & \colorbox{gray!30}{\textsc{AdamG}} &  62.7±0.1 & 64.2±1.4 & 60.2±0.3 & 59.7±0.3 & 26.3±0.4$^{\times}$ & 39.0±0.2$^{\times}$ & 16.9±0.1 & 35.9±0.4\\
    \bottomrule
    \end{tabular*}  
    \begin{tablenotes}
        \item $^{[a]}$ Adam$^{\star}$ denotes the best setting selected from: Adam(1e-2), Adam(1e-3), Adam(1e-4), and Adam(1e-5).
        \item $^{[b]}$ $^{\times}$ denotes that the performance measure of the specific parameter-free optimizer is at least 5\% lower than that of the best Adam variant.
    \end{tablenotes}
    \caption{Test accuracy (\%) on CIFAR-10, CIFAR-100, and Tiny-ImageNet over three random seeds. }
    \label{tab_cifar10}
\end{threeparttable}
\end{table*}

\begin{table*}[!ht]
    \centering
    \small
    \begin{threeparttable}
    \begin{tabular*}{\textwidth}{@{\extracolsep{\fill}} l *{9}{c} }
    \toprule
     Model& Algorithm & SST-2 & MRPC & QQP & MNLI & QNLI &  RTE & WNLI\\
     \cmidrule(l{4pt}r{4pt}){3-9}
    \cmidrule{1-10}
    \multirow{7}{*}{BERT} & Adam$^{\star}$ & 92.5±0.3 & 83.2±0.9 & 90.7±0.1 & 84.1±0.1 & 91.3±0.3 & 65.8±1.2 & 52.1±6.0 \\
    \cmidrule(l{4pt}r{4pt}){2-9}
    & DoG &  91.4±0.3 & 74.3±4.2$^{\times}$ & 89.1±0.0 & 83.1±0.2 & 90.6±0.1 & 51.9±3.3$^{\times}$ & 57.3±1.3 \\
    & DoWG &  74.8±17.3$^{\times}$ & 72.3±2.6$^{\times}$ & 79.5±11.5$^{\times}$ & 59.5±20.6$^{\times}$ & 74.6±17.8$^{\times}$ & 51.1±2.7$^{\times}$ & 52.1±6.0\\
    & D-Adapt & 76.6±18.1$^{\times}$ & 68.4±0.0$^{\times}$ & 63.2±0.0$^{\times}$ & 66.1±24.2$^{\times}$ & 73.9±17.3$^{\times}$ & 61.3±9.9 & 52.1±6.0 \\
    & Prodigy & 91.5±1.3 & 73.5±7.5$^{\times}$ & 90.4±0.2 & 83.1±0.5 & 90.8±0.1 & 65.8±3.5 & 46.5±13.9$^{\times}$ \\
    & \colorbox{gray!30}{\textsc{AdamG}} & 90.9±0.4 & 81.5±3.3 & 90.4±0.0 & 83.9±0.4 & 89.8±0.3 & 65.2±3.5 & 52.1±6.0 \\
    \cmidrule{1-9}
    \multirow{7}{*}{\shortstack{GPT-2\\with \\LoRA}} & Adam$^{\star}$ & 90.8±0.2  & 76.1±0.9 & 86.1±0.0 & 78.8±0.1 & 84.9±0.4  & 61.6±2.2  & 48.4±5.7 \\
    \cmidrule(l{4pt}r{4pt}){2-9}
    & DoG & 64.2±9.3$^{\times}$ & 67.9±0.2$^{\times}$ & 67.8±2.0$^{\times}$ & 43.8±0.0$^{\times}$ & 51.7±2.0$^{\times}$ & 50.1±1.3$^{\times}$ & 48.8±5.4  \\
    & DoWG & 90.4±0.9 & 69.8±0.6$^{\times}$ & 81.5±0.3$^{\times}$ & 72.8±0.4$^{\times}$ & 81.9±0.9$^{\times}$ & 50.8±3.9$^{\times}$ & 46.9±6.7$^{\times}$ \\
    & D-Adapt & 55.2±4.4$^{\times}$ & 58.5±14.0$^{\times}$ & 65.0±1.3$^{\times}$ & 32.8±0.1$^{\times}$ & 50.2±0.5$^{\times}$ & 50.8±1.8$^{\times}$ & 50.7±5.3 \\
    & Prodigy & 85.7±1.9$^{\times}$ & 68.6±0.3$^{\times}$ & 64.6±1.0$^{\times}$ & 33.1±0.5$^{\times}$ & 52.2±1.4$^{\times}$ & 50.9±2.8$^{\times}$ & 51.6±5.7 \\
    & \colorbox{gray!30}{\textsc{AdamG}} & 90.9±0.6 & 72.5±1.4& 85.6±0.1 & 78.8±0.1 & 86.0±0.5 & 58.0±4.9 & 49.8±5.8 \\
    \bottomrule
    \end{tabular*}
    \caption{Performance (\%) of fine-tuning pretrained BERT and GPT-2 with
    LoRA on the GLUE benchmark at epoch 3 over three random seeds.}
    \label{tab_bert}
\end{threeparttable}
\end{table*}

\subsection{Setup}
We compare \textsc{AdamG} to DoG~\citep{ivgi2023dog}, DoWG~\citep{khaled2023dowg}, D-Adapt (Adam)~\citep{defazio2023learning}, and Prodigy (Adam)~\citep{mishchenko2023prodigy} with evaluation criteria reliability, solution quality, and convergence speed. 
Unless otherwise specified, all Adam and Adam-type parameter-free optimizers are paired with a cosine learning rate scheduler.
That is, the default value of $\eta_{k}$ in \textsc{AdamG}, D-Adapt Adam, and Prodigy Adam is set to 1 with an additional cosine-annealing schedule, following the default choice of previous work~\citep{defazio2023learning, mishchenko2023prodigy}. 

The optimization tasks cover two main categories:
For image tasks, we fully fine-tune pretrained DenseNet121~\citep{huang2017densely}, ResNet18~\citep{he2016deep}, ViT-B/16~\citep{dosovitskiy2020vit}, and VGG11~\citep{simonyan2014very}, and train the same architectures from random initialization on CIFAR-10, CIFAR-100, and Tiny-ImageNet~\citep{krizhevsky2009learning, russakovsky2015imagenet}. For language tasks, we fully fine-tune pretrained BERT~\citep{devlin2018bert} and perform LoRA fine-tuning of GPT-2~\citep{radford2019language, hu2021lora} on the GLUE benchmark~\citep{wang2018glue}.
Note that we use the numerator function $s(x) = 0.2x^{0.24}$ for all optimization tasks.

Other setup details of the 38 tasks are summarized in Appendix~\ref{append_section_exp_setup}.

\subsection{Performance Comparison}
\label{subsec_comp}
We summarize the average performance across 38 optimization tasks in Table~\ref{tab_cifar10} (vision tasks) and Table~\ref{tab_bert} (language tasks). Due to space constraints, the complete versions of these tables are provided in the Appendix.
A cursory examination of the results suggests that \textsc{AdamG} performs consistently well across most tasks, while methods such as DoWG frequently underperform. However, direct comparisons based on raw table entries can be ambiguous and difficult to quantify. To facilitate a more systematic evaluation, we introduce two performance metrics, reliability and solution quality, derived from Table~\ref{tab_cifar10} and Table~\ref{tab_bert}, and summarized in Table~\ref{tab_stability}.

\begin{table*}[!ht]
    \centering
    \small
    \begin{threeparttable}
    \begin{tabular*}{\textwidth}{@{\extracolsep{\fill}} l *{7}{c} }
    \toprule
    \multirow{2}{*}{Metrics}  & \multirow{2}{*}{Algorithm} & \multicolumn{4}{c}{Task category} & \multirow{2}{*}{Averaged}\\
    \cmidrule(l{4pt}r{4pt}){3-6}
    &  & Adam(1e-2) & Adam(1e-3) & Adam(1e-4) &  Adam(1e-5) & result\\
    \cmidrule{1-7}
    \multirow{5}{*}{Reliability ratio} & DoG  &  2/5  &  6/12  &  7/15  &  4/6    & 0.50\\
    & DoWG  &  2/5  &  2/12  &  8/15  &  0/6    & 0.27\\
      & D-Adapt Adam     &  4/5  &  10/12  &  3/15  &  1/6    & 0.50\\
      & Prodigy Adam &  1/5  &  11/12  &  7/15  &  5/6   & 0.60\\
      & \colorbox{gray!30}{\textsc{AdamG}} &  2/5  &  9/12  &  15/15  &  6/6   & \textbf{0.78}\\
      \cmidrule{1-7}
       \multirow{5}{*}{Solution quality (\%)} & DoG     & 9.0 & 6.2 & 12.4 & 4.5  & 8.0\\
      &  DoWG    &   13.5 & 11.2 & 8.3 & 15.9 & 12.2 \\
      &  D-Adapt Adam     &2.6 & 5.1 & 20.9 & 16.4& 11.2  \\
       & Prodigy Adam &7.6 & 1.5 & 12.2 & 2.1&  5.8 \\
       & \colorbox{gray!30}{\textsc{AdamG}} & 6.5 & 4.1 & 0.3 & 1.0 &  \textbf{3.0} \\
    \bottomrule
    \end{tabular*}    
    \caption{Performance comparison: \textit{Reliability} and \textit{Solution quality}.}
    \label{tab_stability}
\end{threeparttable}
\end{table*}

\begin{table*}[!ht]
    \centering
    \small
    \begin{threeparttable}
    \begin{tabular*}{\textwidth}{@{\extracolsep{\fill}} l *{7}{c} }
    \toprule
     Metrics  & Algorithm & Adam(1e-2) & Adam(1e-3) & Adam(1e-4) &  Adam(1e-5) & Avg.\\
    \cmidrule{1-7}
    \multirow{7}{*}{1\% Reliability ratio} & DoG  &  2/5  &  1/12  &  1/15  &  4/6  & 0.30 & \\
    & DoWG  & 2/5  &  0/12  &  4/15  &  0/6 &   0.17& \\
    & \textsc{GOG}  &  2/5  &  0/12  &  3/15  &  4/6  &  0.31& \\
      & D-Adapt Adam  &  2/5  &  4/12  &  2/15  &  1/6 &  0.25 & \\
      & Prodigy Adam &  1/5  &  8/12  &  6/15  &  5/6  &  0.52& \\
      & \textit{Adam(1e-3)} &  2/5  &  12/12  &  5/15  &  0/6   & 0.43& \\
      & \colorbox{gray!30}{\textsc{AdamG}} &  2/5  &  5/12  &  15/15  &  6/6   & 0.70& \\
      \cmidrule{1-7}
      \multirow{7}{*}{10\% Reliability ratio} & DoG  &  2/5  &  9/12  &  7/15  &  4/6   & 0.57& \\
    & DoWG  & 2/5  &  6/12  &  9/15  &  0/6   & 0.37& \\
    & \textsc{GOG}  &  2/5  &  4/12  &  13/15  &  6/6  &  0.65& \\
      & D-Adapt Adam  &  5/5  &  10/12  &  4/15  &  1/6 &  0.56 & \\
      & Prodigy Adam &  2/5  &  11/12  &  8/15  &  5/6  & 0.67 & \\
      & \textit{Adam(1e-3)} &  4/5  &  12/12  &  9/15  &  0/6    &0.60& \\
      & \colorbox{gray!30}{\textsc{AdamG}} &  3/5  &  9/12  &  15/15  &  6/6  &  0.83& \\
    \bottomrule
    \end{tabular*}    
    \caption{\textit{Reliability} at 1\% and 10\% performance-gap thresholds.}
    \label{tab_stability_virous_gap}
\end{threeparttable}
\end{table*}

\begin{table*}[!ht]
    \centering
    \small
    \begin{threeparttable}
    \begin{tabular*}{\textwidth}{@{\extracolsep{\fill}} l *{9}{c} }
    \toprule
    \multirow{3}{*}{Algorithm}  & \multicolumn{8}{c}{Test accuracy (\%) under CIFAR-10} \\
    & \multicolumn{4}{c}{Epoch 20, pretrained network} & \multicolumn{4}{c}{Epoch 100, randomly initialized network} \\
    \cmidrule(l{4pt}r{4pt}){2-5}
    \cmidrule(l{4pt}r{4pt}){6-9}
     & DenseNet & ResNet & ViT-B &  VGG & DenseNet & ResNet & ViT-B &  VGG \\
    \cmidrule{1-9}
    \textsc{AdamG} &  86.1 & 91.1 & 78.6 & 87.3 & 68.1 & 75.9 & 58.1 & 77.4\\
    \textsc{AdamG} ($\eta_{k} = 1$) & 85.6  & 91.1 & 77.6 & 86.7 & 68.3 & 75.6 & 57.1 & 76.1\\
    \bottomrule
    \end{tabular*}  
    \caption{Test accuracy on CIFAR-10 over three random seeds. }
    \label{tab_cifar10_robust_decay}
\end{threeparttable}
\end{table*}

\begin{table*}[!ht]
    \centering
    \small
    \begin{threeparttable}
    \begin{tabular*}{\textwidth}{@{\extracolsep{\fill}} l *{9}{c} }
    \toprule
    \multirow{4}{*}{Algorithm} & \multicolumn{9}{c}{Fine-tuning pretrained BERT under GLUE benchmark \& Epoch 3} \\
    \cmidrule(l{4pt}r{4pt}){2-10}
     & CoLA & SST-2 & MRPC & STS-B & QQP & MNLI & QNLI &  RTE & WNLI\\
     \cmidrule(l{4pt}r{4pt}){2-10}
     & Matthews corr.  & Acc. & F1 & Pearson corr.  & F1 & Matched acc.  & Acc. &  Acc. & Acc.\\
    \cmidrule{1-10}
    \textsc{AdamG} & 50.6 & 90.9 & 87.0& 88.7 & 87.1 & 83.9 & 89.8 & 65.2 & 52.1 \\
    \textsc{AdamG} ($\eta_{k} = 1$) & 49.9 & 90.3 & 87.7 & 88.5 & 87.1 & 83.5 & 90.0 & 67.3 & 52.1\\
    \bottomrule
    \end{tabular*}
    \caption{Performance of fine-tuning pretrained BERT on GLUE benchmark over three random seeds.}
    \label{tab_bert_robust_decay}
\end{threeparttable}
\end{table*}

\textbf{Reliability.}
Specifically, Table~\ref{tab_stability} summarizes the reliability ratios. 
As an example, consider the prior state-of-the-art, Prodigy:

(a). The entry $1/5$ under the “Adam(1e-2)” category indicates that out of five tasks where Adam(1e-2) achieves the best performance, Prodigy delivers comparable results (within 5\% of Adam) on only one. This suggests limited adaptability to tasks requiring large learning rates.

(b). The overall reliability score of 0.60 reflects that Prodigy achieves near-optimal performance in approximately 60\% of all tasks, on macro-average.

As shown in Table~\ref{tab_stability}, the proposed \textsc{AdamG} attains the highest average reliability ratio of 0.78, substantially outperforming existing baselines. This marks an 18-percentage-point improvement over the second-best method (Prodigy). Sensitivity analyses under alternative performance thresholds (1\% and 10\%) are reported in Table~\ref{tab_stability_virous_gap}, where \textsc{AdamG} continues to outperform others by significant margins: improving on the best baseline reliability from 0.52 to 0.70 under the 1\% gap and from 0.67 to 0.83 under the 10\% gap.

\textbf{Solution Quality.}
In addition to reliability, we measure a generalized version of solution quality to capture the average shortfall in performance of a parameter-free method relative to the best Adam variant. Specifically, we define $\text{solution quality} = \frac{1}{n}\sum_{i = 1}^{n}\max(\text{Perf}_{i}^{\text{best Adam}} - \text{Perf}_{i}^{\text{parameter-free}}, 0)$,  where $\text{Perf}_{i}$ denotes the performance metric (e.g., test accuracy) for task $i$, and $n$ is the total number of tasks.
This metric penalizes performance gaps relative to the optimal baseline and reflects an optimizer’s overall robustness. As shown in Table~\ref{tab_stability}, the proposed \textsc{AdamG} again achieves the best solution quality (3.0), meaning it consistently trails the best Adam variant by a smaller margin than all other competitors.

\textbf{Convergence speed.}
Figures~\ref{app_fig_resnet10}, \ref{app_fig_resnet100}, and \ref{app_fig_TINYIMAGENET} in the Appendix illustrate training loss curves for models—DenseNet121, ResNet18, ViT-B/16, and VGG11—across CIFAR-10, CIFAR-100, and Tiny-ImageNet datasets. Similarly, Figure~\ref{fig_lan_exp_comb} shows convergence behavior for full fine-tuning of BERT and LoRA-tuned GPT-2 on selected GLUE benchmark tasks.
Across all vision and language benchmarks, \textsc{AdamG} demonstrates competitive or superior convergence speed compared to all baseline methods.

\subsection{Ablation Study}

\textbf{Robustness to LR decay strategy}. Recall that $\eta_{k}$ in \textsc{AdamG} has a default value of 1 with an additional cosine annealing decay strategy, following the default choice of previous work~\cite{defazio2023learning, mishchenko2023prodigy}. 
Our empirical results show that \textsc{AdamG} is robust to the decay strategy.
Tables~\ref{tab_cifar10_robust_decay} and ~\ref{tab_bert_robust_decay}  illustrate the performance difference between default \textsc{AdamG} and modified \textsc{AdamG} (with a constant $\eta_{k}$). 
The two methods show no noticeable performance gap.

\textbf{Scaling of the golden step size}
Our analysis in Section~\ref{sub_sec_golden_step_size} derives  $\eta^{\text{gold}} = \frac{1}{2}\lim_{t\rightarrow \frac{1}{4}^{-}}x^{t}$ and further suggests a numerator function $s(x) =px^{q}$, where $p \rightarrow 1/2, q\rightarrow 1/4^{-}$. 
Specifically, we employ $s(x) = 0.2x^{0.24}$ for all optimization tasks in the experiments. In the following, we verify the effectiveness of scaling $p$.

Table~\ref{append_tab_cifar10} and Table~\ref{append_tab_bert} (Appendix) demonstrate the performance comparisons between the default \textsc{AdamG} and \textsc{AdamG} ($p = 0.5$), which employs $s(x) = 0.5x^{0.24}$. We observe that \textsc{AdamG} ($p = 0.5$) generally improves the performance on image tasks and fine-tuning GPT-2 with LoRA, but results in degraded performance when fine-tuning BERT.

However, Table~\ref{append_tab_stability} provides clearer insights into the effectiveness of the scaling $p$ through reliability comparison. Compared with the default \textsc{AdamG}, \textsc{AdamG} ($p = 0.5$) enhances adaptability on tasks that prefer a large LR Adam and reduces the adaptability on tasks that prefer a small LR Adam.
Considering the expectation-based mechanism and the empirical results, we believe that scaling $p$ may shift the range of tasks covered without narrowing that range

\section{Conclusion}
\label{sec_conclusion}
In this work, we introduced a new mechanism for making adaptive-gradient training methods parameter-free by proposing the golden step size. This step size aims to preserve tuning-free convergence and approximate the optimal step size in expectation across various optimization problems.
The resulting optimizer, \textsc{AdamG}, demonstrates improved reliability and solution quality compared to previous methods, closely matching the performance of manually tuned Adam and facilitating deployment readiness.

\textbf{Limitation.} Despite its practical success, understanding the theoretical guarantees of the proposed approach is crucial. 
We note that although the proof framework for the convergence of AdaGrad-Norm in~\cite{wang2023convergence} served as inspiration for our approach, it relies on the collective behavior of AdaGrad-Norm's step sizes throughout the entire training trajectory. 
The divergent step size derived from the progressive update formula is not yet directly connected to the AdaGrad-Norm convergence analyses in prior work~\citep{wang2023convergence, faw2023beyond, li2023convergence}; establishing this connection remains future work.
%

\section*{Acknowledgments}
This material is based in part upon work supported by the National Science Foundation under the Grant IIS-2212174, National Institute on Aging (NIA) R01AG072449, and National Institute of General
Medical Sciences (NIGMS) 1R01GM145700.

\bibliographystyle{siamplain}
\bibliography{example_references}

\onecolumn

\section{Technical Appendix}

\subsection{Technical Proof}
\label{append_section_tech}

\begin{corollary}[a simple variant of Theorem 2 in~\cite{wang2023convergence}]
\label{app_corollary_convergence}
Let Assumptions~\ref{ass_lsmooth} and~\ref{ass_l0l1} hold. Then the following bound holds in expectation:
\begin{align*}
\min_{k\in [K]} ||\nabla f(\rvx_{k})||^{2} &\leq \frac{1}{\mathcal{O}(\sqrt{v_{K}})}\Big(\frac{4}{\eta}(f(\rvx_{1}) - f^{\star}) + 2D_{1}\xi(0) \\
&\quad + (2(L\eta D_{1})^{2} + D_{1}(L\eta)^{2} + \frac{1}{2}D_{0})\frac{4}{\sqrt{v_{0}}} + 2L\eta\ln v_{K}\Big).
\end{align*}
\end{corollary}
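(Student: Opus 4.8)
The plan is to reproduce the convergence bound of Theorem~2 in~\cite{wang2023convergence} and then simply reorganize its terms into the stated form. First I would recall the AdaGrad-Norm update $\rvx_{k+1} = \rvx_k - \eta \rvg_k / \sqrt{v_{k+1}}$ with $v_{k+1} = v_k + \|\rvg_k\|^2$, and apply the descent lemma (Assumption~\ref{ass_lsmooth}) to get
\begin{align*}
f(\rvx_{k+1}) \le f(\rvx_k) - \frac{\eta}{\sqrt{v_{k+1}}}\langle \nabla f(\rvx_k), \rvg_k\rangle + \frac{L\eta^2}{2}\frac{\|\rvg_k\|^2}{v_{k+1}}.
\end{align*}
The standard difficulty with AdaGrad-Norm analysis is that $v_{k+1}$ (hence the effective step size) is correlated with $\rvg_k$, so one cannot directly take conditional expectations. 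The usual fix, which I would follow, is to introduce a surrogate denominator such as $\sqrt{v_k}$ or $\sqrt{v_k + D_0 + D_1\|\nabla f(\rvx_k)\|^2}$ that is $\mathcal{F}_k$-measurable, bound the discrepancy $|1/\sqrt{v_{k+1}} - 1/\sqrt{v_k}|$, and absorb the error using Assumption~\ref{ass_l0l1}. This is where the $D_0$, $D_1$ dependence and the $(L\eta D_1)^2$, $D_1(L\eta)^2$ terms enter.

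Next I would sum over $k \in [K]$ and telescope the $f(\rvx_{k+1}) - f(\rvx_k)$ terms to produce the $\frac{4}{\eta}(f(\rvx_1) - f^\star)$ contribution (the constant $4$ coming from the various factors of $2$ accumulated when splitting cross terms via Young's inequality). The term $\sum_k \|\rvg_k\|^2/v_{k+1}$ is controlled by the classical AdaGrad logarithmic bound $\sum_{k=1}^K \|\rvg_k\|^2/v_{k+1} \le \ln(v_{K+1}/v_1) + \text{const} = \mathcal{O}(\ln v_K)$, which yields the $2L\eta \ln v_K$ term. The quantity $\xi(0)$ is (following~\cite{wang2023convergence}) an auxiliary potential/error term evaluated at initialization; I would carry it symbolically as the paper does, giving the $2D_1 \xi(0)$ piece, and the leftover initialization errors collapse into the $\big(2(L\eta D_1)^2 + D_1(L\eta)^2 + \tfrac12 D_0\big)\tfrac{4}{\sqrt{v_0}}$ term. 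Finally, lower-bounding $\sum_{k\in[K]} \|\nabla f(\rvx_k)\|^2 / \sqrt{v_{k+1}}$ (or whatever weighted sum appears) by $\big(\min_{k\in[K]}\|\nabla f(\rvx_k)\|^2\big)\cdot \mathcal{O}(1/\sqrt{v_K})$ and rearranging gives exactly the claimed inequality.

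The main obstacle is the correlation between $\rvg_k$ and the adaptive denominator $v_{k+1}$; handling it rigorously requires the careful surrogate-denominator argument of~\cite{wang2023convergence} (including the affine-noise split and the telescoping of the error terms), rather than a naive application of expectations. Since the statement is explicitly labeled ``a simple variant'' of their Theorem~2, I expect the bulk of the proof to be a direct citation of their estimates with only a bookkeeping reorganization of constants into the stated grouping, so the only real work is tracking which of their terms map to $\gamma_1\eta^2$-type, $\gamma_2\eta\ln v_K$-type, and $\gamma_3/\eta$-type contributions — a correspondence the paper needs downstream for defining $h(\eta)$.
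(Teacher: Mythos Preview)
Your overall strategy matches the paper's: the corollary is obtained by taking the summed descent inequality from Theorem~2 of~\cite{wang2023convergence} as a black box and reorganizing. The paper does not re-derive the surrogate-denominator argument at all; it simply quotes
\[
\tfrac{1}{4}\eta\sum_{k=1}^{K}\mathbb{E}\Big[\tfrac{\|\nabla f(\rvx_k)\|^2}{\sqrt{v_{k-1}}}\Big]\ \le\ f(\rvx_1)-\mathbb{E}[f(\rvx_K)]+\tfrac{\eta D_1}{2}\mathbb{E}[\xi(0)-\xi(T)]+\cdots+\tfrac{L}{2}\eta^2\mathbb{E}[\ln v_K - \ln v_0]
\]
and proceeds from there, so your plan to re-derive the descent/error-splitting machinery is more than is actually done.

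The one place your proposal is genuinely imprecise is the final step. The lower bound you state, $\sum_k \|\nabla f(\rvx_k)\|^2/\sqrt{v_{k+1}} \ge (\min_k\|\nabla f(\rvx_k)\|^2)\cdot\mathcal{O}(1/\sqrt{v_K})$, is not correct as written: the naive bound gives a factor $K/\sqrt{v_K}$, not $1/\sqrt{v_K}$, and converting $\sqrt{v_K}/K$ into $\mathcal{O}(1/\sqrt{v_K})$ requires the separate fact $\mathbb{E}[\sqrt{v_K}/K]=\mathcal{O}(1/\sqrt{K})$, which the paper explicitly invokes from~\cite{wang2023convergence}. Moreover, because the statement is ``in expectation'' and $v_K$ is random, the paper inserts an intermediate Cauchy--Schwarz-type decoupling,
\[
\sum_{k}\mathbb{E}\Big[\tfrac{\|\nabla f(\rvx_k)\|^2}{\sqrt{v_{k-1}}}\Big]\ \ge\ \mathbb{E}\Big[\tfrac{\sum_k\|\nabla f(\rvx_k)\|^2}{\sqrt{v_K}}\Big]\ \ge\ \frac{\big(\mathbb{E}\sqrt{\sum_k\|\nabla f(\rvx_k)\|^2}\big)^2}{\mathbb{E}[\sqrt{v_K}]},
\]
to separate $\mathbb{E}[\sqrt{v_K}]$ from the gradient sum before passing to $K\cdot\min_k\|\nabla f(\rvx_k)\|^2$. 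Without this step your purely deterministic $\min$-bound does not deliver the expectation form of the claim.
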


\begin{proof}
We begin by following the inequality derived from Theorem 2 of~\cite{wang2023convergence}.
\begin{align}
\frac{1}{4}\eta\sum_{k=1}^{K}\mathbb{E}[{\frac{||\nabla f(\rvx_{k})||^{2}}{\sqrt{v_{k-1}}}}] &\leq f(\rvx_{1}) - \mathbb{E}[f(\rvx_{K})] + \frac{\eta D_{1}}{2}\mathbb{E}[\xi(0) - \xi(T)] \nonumber\\
&\quad + (2\eta(L\eta D_{1})^{2} + \eta D_{1}(L\eta)^{2} + \frac{\eta}{2}D_{0})\frac{1}{\sqrt{v_{0}}} + \frac{L}{2}\eta^{2}(\mathbb{E}\ln v_{K} - \mathbb{E}\ln v_{0}).
\end{align}
Using the following result from~\cite{wang2023convergence}:
\begin{align*}
\sum_{k=1}^{K}\mathbb{E}\left[{\frac{||\nabla f(\rvx_{k})||^{2}}{\sqrt{v_{k-1}}}}\right] &\geq \mathbb{E}\left[\frac{\sum_{k=1}^{K}||\nabla f(\rvx_{k})||^{2}}{\sqrt{v_{K}}}\right]\geq \frac{\mathbb{E}\left[\sqrt{\sum_{k=1}^{K}||\nabla f(\rvx_{k})||^{2}}\right]^{2}}{\mathbb{E}[\sqrt{v_{K}}]},
\end{align*}
we have
\begin{align*}
\mathbb{E}\left[\sqrt{\sum_{k=1}^{K}||\nabla f(\rvx_{k})||^{2}}\right]^{2} &\leq \frac{4\mathbb{E}[\sqrt{v_{K}}]}{\eta}(f(\rvx_{1}) - \mathbb{E}[f(\rvx_{K})]) + 2D_{1}\mathbb{E}[\sqrt{v_{K}}]\mathbb{E}[\xi(0) - \xi(T)] \\
&\quad + (2(L\eta D_{1})^{2} + D_{1}(L\eta)^{2} + \frac{1}{2}D_{0})\frac{4\mathbb{E}[\sqrt{v_{K}}]}{\sqrt{v_{0}}} + 2L\mathbb{E}[\sqrt{v_{K}}]\eta(\mathbb{E}\ln v_{K} - \mathbb{E}\ln v_{0}).
\end{align*}
Further, using the fact that $\mathbb{E}[\frac{\sqrt{v_{K}}}{K}]$ is upper bounded by $\mathcal{O}(\frac{1}{\sqrt{K}})$~\cite{wang2023convergence}, we have in expectation that:
\begin{align*}
\min_{k\in [K]} ||\nabla f(\rvx_{k})||^{2} \leq \frac{1}{\mathcal{O}(\sqrt{K})}\Big(\cdot\Big) \leq &\frac{1}{\mathcal{O}(\sqrt{v_{K}})}\Big(\frac{4}{\eta}(f(\rvx_{1}) - f^{\star}) \\
&\qquad + 2D_{1}\xi(0) + (2(L\eta D_{1})^{2} + D_{1}(L\eta)^{2} + \frac{1}{2}D_{0})\frac{4}{\sqrt{v_{0}}} + 2L\eta\ln v_{K}\Big).
\end{align*}
This concludes the proof.
\end{proof}

\subsection{Discussion}
\label{append_subsec_dynamic_eta}

\textbf{Dynamic step size.} In case 2, $\eta$ is constant for a fixed $K$ but varies with $K$, so $\eta:=(v_{K})^{q}$ can be considered as a constant value that obeys Corollary~\ref{corollary_convergence} and the analysis in Section~\ref{subsec_gold_ada}. 
Finally, we approximate $v_{K}$ with $v_{k}$ in the proposed Algorithm~\ref{alg_god}. The approximation is intuitive and can be improved along with training. Besides, a similar idea is the optimality gap approximation, $\max_{i\leq K}||x_{i} - x_{0}||\xrightarrow[]{\text{Approx.}}||\rvx_{0} - \rvx^{\star}||$, exploited in the baseline method DoG and its variants~\cite{ivgi2023dog}.

We further discuss dynamic $\eta$ $\textit{with respect to $k$ (lower-case)}$, i.e., $\eta:= (v_{k})^{q}$, which is naturally compatible with our algorithm design. We demonstrate that it aligns well our analysis framework with some mild assumptions and eliminates the need for approximation.

In particular, we can update our Corollary 3.3 as a time-varying learning rate $\eta_{k}$, then, we have a similar form of conclusion, formulated as
\begin{align*}
\min_{k\in[K]} ||\nabla f(x_{k})||^{2} \leq &\frac{1}{\mathcal{O}(\sqrt{v_{K}})}(c_{1}\frac{\sum_{k=1}^{K}\eta_{k}^{3}}{\sum_{k=1}^{K}\eta_{k}}  + c_{2}\frac{\sum_{k=1}^{K}\eta_{k}^{2}}{\sum_{k=1}^{K}\eta_{k}}\ln v_{K}  + c_{3}\frac{1}{\sum_{k=1}^{K}\eta_{k}}),
\end{align*}
where $c_{1}$, $c_{2}$, and $c_{3}$ are the corresponding coefficients. Ignoring the constant value and substituting $(v_{k})^{q}$ for $\eta_{k}$, the right-hand side can be further reformulated as
\begin{align*}
\frac{1}{\sqrt{v_{K}}}\left(\frac{\sum_{k=1}^{K}(v_{k})^{3q}}{\sum_{k=1}^{K}(v_{k})^{q}}  + \frac{\sum_{k=1}^{K}(v_{k})^{2q}}{\sum_{k=1}^{K}(v_{k})^{q}}\ln v_{K}  + \frac{1}{\sum_{k=1}^{K}(v_{k})^{q}}\right)&\implies \mathcal{O}\left(\frac{1}{\sqrt{v_{K}}}(\frac{(v_{K})^{3q+1}}{(v_{K})^{q+1}} + \frac{(v_{K})^{2q+1}}{(v_{K})^{q+1}}\ln v_{K} + \frac{1}{(v_{K})^{q+1}})\right)\\
&\implies \mathcal{O}\left(\frac{1}{\sqrt{v_{K}}}((v_{K})^{2q}+ (v_{K})^{q}\ln v_{K} + \frac{1}{(v_{K})^{q+1}})\right).
\end{align*}
Achieving a good approximation of the above first and second formulas necessitates that $\sum_{k=1}^{K} (v_{k})^{q} \approx \frac{1}{q+1}(v_{K})^{q+1}$, which imposes requirements on the sequence $\{v_{1}, \cdots, v_{K}\}$ (e.g., $\sum_{x=1}^{K}x^{\frac{1}{2}} \approx \int_{1}^{K}x^{\frac{1}{2}}\,dx = \frac{2}{3}(K^{\frac{3}{2}}-1)$) and warrants future investigations.

However, the desired result, as shown in the third formula above, will lead to the same level of expectation in Equation~\ref{eq_defination_opt} as that derived from employing a constant step size (The difference in the last term does not affect the conclusion). This suggests that our algorithm design, $\eta_{k} = (v_{k})^{q}$, is (possibly) naturally compatible with analysis in Section~\ref{subsec_gold_ada} and Section~\ref{sub_sec_golden_step_size}.

\section{Experimental Appendix}
\label{append_section_exp}

\subsection{Setup}
\label{append_section_exp_setup}
For the image tasks, we used the same batch size of 1024 and input image sizes 32$\times$32, 32$\times$32, and 64$\times$64 for CIAFR-10, CIAFR-100, and Tiny-ImageNet datasets respectively.
For the language tasks, we used the same batch size of 32 for BERT and GPT-2 tasks. In particular, we used a LoRA rank of 4 in GPT-2 tasks.
The number of training epochs is specified in the corresponding performance table.

\textbf{Computational resources.} All the experiments can be run on a single NVIDIA RTX A5000 Graphics Card (24 GB). Each image task in one setting can be completed within 5 hours, using less than 15 GB of GPU memory. Similarly, each language task in one setting can be completed within 8 hours, using less than 10 GB of GPU memory.

\subsection{Tables}
\label{append_section_exp_comp_result}

\begin{itemize}
    \item The average performance measure of all 42 tasks is summarized in Table~\ref{app_tab_cifar10}, Table~\ref{app_tab_cifar100}, Table~\ref{app_tab_imagenet}, Table~\ref{app_tab_bert}, and Table~\ref{app_tab_gpt2}.
    \item Reliability ratios under 1\% gap and 10\% gap are provided in Table~\ref{tab_stability_virous_gap}.
    \item Results of the ablation study are provided in Table~\ref{tab_cifar10_robust_decay}, Table~\ref{tab_bert_robust_decay}, Table~\ref{append_tab_cifar10}, Table~\ref{append_tab_bert}, and Table~\ref{append_tab_stability}.
\end{itemize}

\begin{table*}[!ht]
    \centering
    \small
    \begin{threeparttable}
    \setlength{\tabcolsep}{3pt}
    \begin{tabular*}{\textwidth}{@{\extracolsep{\fill}} l *{9}{c} }
    \toprule
    \multirow{3}{*}{Algorithm}  & \multicolumn{8}{c}{Test accuracy (\%) under CIFAR-10} \\
    & \multicolumn{4}{c}{Epoch 20 \& pretrained network} & \multicolumn{4}{c}{Epoch 100 \& randomly initialized network} \\
    \cmidrule(l{4pt}r{4pt}){2-5}
    \cmidrule(l{4pt}r{4pt}){6-9}
     & DenseNet121 & ResNet18 & ViT-B/16 &  VGG11 & DenseNet121 & ResNet18 & ViT-B/16 &  VGG11 \\
    \cmidrule{1-9}
      SGD-M(1e-2) &  82.5±0.2 & 88.2±0.5 & 71.4±0.6 & 85.5±0.1 & 65.2±0.2 & 71.2±0.8 & 57.9±0.2 & 73.8±1.1 \\
      SGD-M(1e-3) &  73.7±0.0 & 74.5±0.8 & 52.7±0.6 & 77.2±0.0 & 50.6±0.5 & 63.1±0.4 & 52.4±0.5 & 50.0±1.0 \\
      SGD-M(1e-4) &  45.8±0.4 & 43.1±1.5 & 26.6±0.3 & 60.2±0.5 & 39.2±0.5 & 37.9±0.6 & 32.5±0.7 & 12.7±2.2 \\
      SGD(1e-2) &  73.8±0.1 & 75.0±0.8 & 52.9±1.3 & 77.3±0.1 & 51.5±0.3 & 63.4±0.7 & 50.7±0.3 & 46.9±0.6 \\
      SGD(1e-3) &  46.7±0.2 & 43.6±1.5 & 27.2±0.6 & 60.7±0.5 & 39.1±0.6 & 38.1±0.7 & 32.5±0.6 & 12.3±1.8 \\
      SGD(1e-4) &  14.5±0.4 & 14.9±0.5 & 11.1±0.1 & 34.0±1.1 & 21.6±0.8 & 21.1±1.1 & 23.9±1.0 & 10.3±0.4 \\
    \cmidrule{1-9}
     Adam(1e-2) &  69.4±5.5 & 80.5±2.6 & 21.2±8.0& 13.1±4.5 & 79.6±1.1 & 85.3±0.7 & 25.6±2.7& 10.0±0.0\\
     Adam(1e-3) &  88.1±0.2  & 92.4±0.4 & 75.8±0.7& 84.5±0.7 & 75.4±0.5 & 84.9±0.2 & 36.4±4.3& 77.3±0.5\\
     Adam(1e-4)  &   81.2±0.1  & 85.4±0.3 & 77.3±0.4& 84.6±0.2 & 53.4±0.2 & 63.2±1.0 & 56.3±0.7 & 71.0±0.1\\
     Adam(1e-5)  &   64.3±0.1  & 72.3±0.9 & 57.9±0.6 & 77.2±0.1 & 48.2±0.4 & 58.3±0.6 & 29.3±0.3 & 61.8±0.3\\
    \noalign{\vskip 0.5ex}\hdashline\noalign{\vskip 0.5ex}
    DoG &  78.4±0.8 &  88.3±0.7 & 63.7±0.7 & 80.5±1.8 & 62.2±0.2 &  71.1±0.7 & 54.8±0.4 & 72.9±0.2\\
    DoWG &  80.4±0.4 &  86.4±0.7 & 67.1±1.2 & 80.7±1.2 & 53.9±0.5 &  65.5±0.4 & 50.8±0.9 & 52.7±3.1\\
    \textsc{GOG} & 79.5±0.1 &  85.7±0.4 & 68.6±0.5 & 82.6±1.0 & 54.7±0.4 & 66.3±0.7 & 53.7±0.7  & 66.3±0.8\\
    D-Adapt Adam&  88.2±0.1 &  91.6±0.4 & 77.3±1.1& 71.2±10.2 & 72.3±0.3 & 83.3±0.3 & 11.3±1.2 & 49.1±27.7\\
    Prodigy Adam &  87.4±0.1 &  90.9±0.5 & 79.5±0.2 & 86.1±0.2 & 64.0±0.6 & 73.7±0.1 & 21.1±8.2 & 75.5±0.6\\
    \textsc{AdamG} &  86.1±0.3 & 91.1±0.4 & 78.6±0.4 & 87.3±0.0 & 68.1±0.6 & 75.9±0.6 & 58.1±0.3 & 77.4±0.4\\
    \bottomrule
    \end{tabular*}    
    \caption{Test accuracy with CIFAR-10 under 3 different seeds. 
    }
    \label{app_tab_cifar10}
\end{threeparttable}
\end{table*}

\begin{table*}[!ht]
    \centering
    \small
    \begin{threeparttable}
    \setlength{\tabcolsep}{3pt}
    \begin{tabular*}{\textwidth}{@{\extracolsep{\fill}} l *{9}{c} }
    \toprule
    \multirow{3}{*}{Algorithm} &  \multicolumn{8}{c}{Test accuracy (\%) under CIFAR-100} \\
    & \multicolumn{4}{c}{Epoch 20 \& pretrained network} & \multicolumn{4}{c}{Epoch 100 \& randomly initialized network} \\
    \cmidrule(l{4pt}r{4pt}){2-5}
    \cmidrule(l{4pt}r{4pt}){6-9}
     & DenseNet121 & ResNet18 & ViT-B/16 &  VGG11 & DenseNet121 & ResNet18 & ViT-B/16 &  VGG11 \\
    \cmidrule{1-9}
     Adam(1e-2) & 37.3±6.3 & 45.0±1.1 & 7.3±1.1 & 1.0±0.0 & 47.2±1.5 & 52.1±0.7 & 8.4±3.7 & 1.0±0.0\\
     Adam(1e-3) & 65.2±0.2 & 72.8±0.8 & 49.7±2.5 & 53.6±1.6 & 45.0±0.4 & 57.5±0.4 & 13.1±1.7 & 13.4±17.5 \\
     Adam(1e-4) & 55.7±0.1 & 62.3±0.7 & 51.1±0.4 & 60.1±0.3 & 23.2±0.3 & 36.4±1.0 & 27.8±0.3 & 33.5±0.5 \\
     Adam(1e-5) & 20.6±0.4 & 29.0±0.5 & 13.8±0.5 & 43.5±0.2 & 20.0±0.2 & 32.1±0.5 & 8.9±0.3 & 24.3±0.2 \\
    \noalign{\vskip 0.5ex}\hdashline\noalign{\vskip 0.5ex}
    DoG &  50.6±2.5 & 69.0±2.7 & 30.7±2.7 & 56.4±0.2 & 33.6±0.3 & 47.4±0.7 & 29.2±0.3 & 31.8±1.2 \\
    DoWG &  55.7±0.1 & 65.3±3.1 & 40.4±1.3 & 56.2±0.4 & 26.7±0.4 & 38.1±0.5 & 24.9±0.3 & 1.0±0.0 \\
    \textsc{GOG} & 54.2±1.3 & 61.7±2.4 & 35.7±1.1 & 56.8±0.2 & 25.0±0.1 & 33.7±0.6 & 27.2±0.3 & 25.0±0.1\\ 
    D-Adapt Adam & 65.4±0.0 & 71.8±1.0 & 53.6±1.0 & 43.0±5.3 & 43.7±0.6 & 55.7±0.8 & 1.0±0.1 & 29.2±0.3 \\
    Prodigy Adam & 64.4±0.1 & 72.1±0.8 & 55.9±0.3 & 62.4±0.5 & 42.0±0.2 & 53.7±0.7 & 5.7±1.6 & 41.2±0.6 \\
    \textsc{AdamG} & 62.6±0.2 & 70.4±1.3 & 54.5±0.1 & 63.1±0.1 & 35.4±0.0 & 44.9±0.4 & 31.5±0.2 & 42.1±0.4 \\
    \bottomrule
    \end{tabular*}    
    \caption{Test accuracy with CIFAR-100 under 3 different seeds.}
    \label{app_tab_cifar100}
\end{threeparttable}
\end{table*}

\begin{table*}[!ht]
    \centering
    \small
    \begin{threeparttable}
    \setlength{\tabcolsep}{3pt}
    \begin{tabular*}{\textwidth}{@{\extracolsep{\fill}} l *{9}{c} }
    \toprule
    \multirow{3}{*}{Algorithm} & \multicolumn{8}{c}{Test accuracy (\%) under Tiny-ImageNet} \\
    & \multicolumn{4}{c}{Epoch 20 \& pretrained network} & \multicolumn{4}{c}{Epoch 100 \& randomly initialized network} \\
    \cmidrule(l{4pt}r{4pt}){2-5}
    \cmidrule(l{4pt}r{4pt}){6-9}
     & DenseNet121 & ResNet18 & ViT-B/16 &  VGG11 & DenseNet121 & ResNet18 & ViT-B/16 &  VGG11 \\
    \cmidrule{1-9}
     Adam(1e-2) &  38.5±0.6 & 43.4±0.5 & 3.9±2.0 & 0.5±0.0 & 37.2±0.9 & 45.5±0.2 & 1.6±0.9 & 0.5±0.0 \\
     Adam(1e-3) &  62.9±0.2 & 63.0±0.2 & 57.3±0.4 & 12.1±16.5 & 39.2±0.1 & 50.9±0.5 & 7.4±0.9 & 16.9±11.6 \\
     Adam(1e-4) &  59.5±0.3 & 60.0±1.4 & 56.5±0.2 & 59.6±0.2 & 16.8±0.2 & 35.7±0.4 & 16.4±0.2 & 35.2±1.0 \\
     Adam(1e-5) &  35.2±0.2 & 24.8±0.8 & 20.9±0.1 & 51.3±0.2 & 16.0±0.3 & 24.9±0.3 & 10.9±0.2 & 22.0±0.4 \\
    \noalign{\vskip 0.5ex}\hdashline\noalign{\vskip 0.5ex}
    DoG &  61.4±0.3 & 69.1±2.0 & 49.5±0.9 & 57.4±1.7 & 34.5±0.3 & 45.5±0.5 & 14.2±0.3 & 24.9±1.0 \\
    DoWG &  60.7±0.3 & 61.1±3.9 & 45.3±2.1 & 57.2±0.1 & 24.4±0.1 & 28.5±0.4 & 15.5±0.1 & 7.8±4.5 \\
    \textsc{GOG} & 60.4±0.1 & 58.4±3.6 & 41.2±0.4 & 58.4±0.3 & 22.4±0.3 & 31.1±1.0 & 17.2±0.5 & 20.4±0.4 \\
    D-Adapt Adam &  60.2±0.2 & 60.3±0.8 & 64.5±0.3 & 23.1±7.4 & 36.0±0.1 & 47.3±0.4 & 1.2±0.9 & 27.4±0.5 \\
    Prodigy Adam &  62.0±0.2 & 63.6±1.0 & 63.2±0.3 & 58.8±0.0 & 40.5±0.4 & 53.6±0.4 & 8.1±1.1 & 33.8±0.1 \\
    \textsc{AdamG} &  62.7±0.1 & 64.2±1.4 & 60.2±0.3 & 59.7±0.3 & 26.3±0.4 & 39.0±0.2 & 16.9±0.1 & 35.9±0.4 \\
    \bottomrule
    \end{tabular*}    
    \caption{Test accuracy with Tiny-Imagenet under 3 different seeds.}
    \label{app_tab_imagenet}
\end{threeparttable}
\end{table*}

\begin{table*}[!ht]
    \centering
    \small
    \begin{threeparttable}
    \setlength{\tabcolsep}{1.5pt}
    \begin{tabular*}{\textwidth}{@{\extracolsep{\fill}} l *{9}{c} }
    \toprule
    \multirow{3}{*}{Algorithm} & \multicolumn{7}{c}{Fine-tuning pretrained BERT under GLUE benchmark \& Epoch 3} \\
    \cmidrule(l{4pt}r{4pt}){2-10} & SST-2 & MRPC & QQP & MNLI & QNLI &  RTE & WNLI\\
     & Acc. & F1\&Acc. & F1\&Acc. & Matched acc.\& & Acc. &  Acc. & Acc.\\
     &  &  &  & Mismatched acc. & &  & \\
    \cmidrule{1-10}
     Adam(1e-2) & 50.3±0.9 & 54.1±38.3\&56.1±17.3 &  17.9±25.4\&54.4±12.4 & 33.0±1.7\&33.0±1.6 & 49.5±0.0 & 49.1±2.6 & 52.1±6.0 \\
     Adam(1e-3) & 50.3±0.9 & 81.2±0.0\&68.4±0.0 & 0.0±0.0\&63.2±0.0 & 32.1±0.4\&32.2±0.5 & 49.8±0.5 & 50.9±2.6 & 52.1±6.0 \\
     Adam(1e-4) & 77.1±7.6 & 87.0±1.5\&81.6±1.4 & 17.9±25.4\&54.4±12.4 & 77.7±0.5\&77.8±0.2 & 85.3±0.6 & 63.9±3.5 & 47.4±12.6 \\
     Adam(1e-5) & 92.5±0.3 & 88.5±0.5\&83.2±0.9 & 87.2±0.2\&90.7±0.1 & 84.1±0.1\&84.4±0.2 & 91.3±0.3 & 65.8±1.2 & 38.0±10.2 \\
    \noalign{\vskip 0.5ex}\hdashline\noalign{\vskip 0.5ex}
    DoG & 91.4±0.3 & 83.2±1.7\&74.3±4.2 & 85.5±0.4\&89.1±0.0 & 83.1±0.2\&83.8±0.3 & 90.6±0.1 & 51.9±3.3 & 57.3±1.3 \\
    DoWG &  74.8±17.3 & 82.3±1.8\&72.3±2.6 & 55.8±39.4\&79.5±11.5 & 59.5±20.6\&60.3±21.1 & 74.6±17.8 & 51.1±2.7 & 52.1±6.0\\
    \textsc{GOG} & 91.5±0.3  & 89.7±0.4\&85.6±0.1 & 85.1±0.2\&88.9±0.1  & 82.5±0.2\&83.3±0.2  & 90.8±0.2  & 66.2±2.4  & 52.1±6.0\\
    D-Adapt Adam &76.6±18.1 & 81.2±0.0\&68.4±0.0 & 0.0±0.0\&63.2±0.0 & 66.1±24.2\&66.4±24.5 & 73.9±17.3 & 61.3±9.9 & 52.1±6.0 \\
    Prodigy Adam & 91.5±1.3 & 82.0±5.4\&73.5±7.5 & 87.3±0.1\&90.4±0.2 & 83.1±0.5\&83.6±0.6 & 90.8±0.1 & 65.8±3.5 & 46.5±13.9 \\
    \textsc{AdamG} & 90.9±0.4 & 87.0±2.1\&81.5±3.3 & 87.1±0.1\&90.4±0.0 & 83.9±0.4\&84.3±0.1 & 89.8±0.3 & 65.2±3.5 & 52.1±6.0 \\
    \bottomrule
    \end{tabular*}
    \caption{Performance of fine-tuning pretrained BERT with GLUE benchmark under 3 different seeds.}
    \label{app_tab_bert}
\end{threeparttable}
\end{table*}

\begin{table*}[!ht]
    \centering
    \small
    \begin{threeparttable}
    \setlength{\tabcolsep}{3pt}
    \begin{tabular*}{\textwidth}{@{\extracolsep{\fill}} l *{9}{c} }
    \toprule
    \multirow{3}{*}{Algorithm} & \multicolumn{7}{c}{Fine-tuning LoRA on GPT-2 under GLUE benchmark \& Epoch 3} \\
    \cmidrule(l{4pt}r{4pt}){2-10}
     & SST-2 & MRPC & QQP & MNLI & QNLI &  RTE & WNLI\\
     & Acc. & F1\&Acc. & F1\&Acc. & Matched acc.\& & Acc. &  Acc. & Acc.\\
     &  &  &  & Mismatched acc. & &  & \\
    \cmidrule{1-10}
     Adam(1e-2) & 50.3±0.9 & 70.8±14.7\&61.6±9.6 & 32.7±7.0\&65.8±0.6 & 32.3±0.4\&32.5±0.5 & 50.0±0.4 & 52.5±1.2 & 48.4±5.7 \\
     Adam(1e-3) & 88.1±0.3 & 84.5±0.4\&76.1±0.9 & 67.3±7.2\&71.1±10.0 & 75.6±0.4\&77.5±0.4 & 82.4±0.8 & 60.0±4.7 & 42.3±5.3 \\
     Adam(1e-4) & 90.8±0.2 & 81.3±1.1\&71.3±0.7 & 81.8±0.2\&86.1±0.0 & 78.8±0.1\&80.2±0.3 & 84.9±0.4 & 61.6±2.2 & 44.1±0.7 \\
     Adam(1e-5) & 88.1±0.4 & 78.1±2.4\&66.3±3.2 & 77.1±0.4\&82.0±0.1 & 72.8±0.4\&74.4±0.3 & 79.8±1.1 & 51.5±3.7 & 47.9±6.0 \\
    \noalign{\vskip 0.5ex}\hdashline\noalign{\vskip 0.5ex}
    DoG & 64.2±9.3 & 80.4±0.5\&67.9±0.2 & 43.0±19.8\&67.8±2.0 & 43.8±0.0\&45.4±0.4 & 51.7±2.0 & 50.1±1.3 & 48.8±5.4 \\
    DoWG & 90.4±0.9 & 80.7±1.3\&69.8±0.6 & 77.4±0.4\&81.5±0.3 & 72.8±0.4\&74.8±0.2 & 81.9±0.9 & 50.8±3.9 & 46.9±6.7 \\
    \textsc{GOG} & 90.0±1.0 & 52.6±7.5\&45.4±3.0  &77.2±0.6\&81.7±0.4 & 73.7±0.1\&75.6±0.3  &  81.4±0.5 & 53.1±1.5  &  52.1±6.0\\ 
    D-Adapt Adam & 55.2±4.4 & 63.5±25.0\&58.5±14.0 & 27.6±16.8\&65.0±1.3 & 32.8±0.1\&33.0±0.0 & 50.2±0.5 & 50.8±1.8 & 50.7±5.3 \\
    Prodigy Adam & 85.7±1.9 & 81.0±0.3\&68.6±0.3 & 27.5±21.1\&64.6±1.0 & 33.1±0.5\&33.2±0.3 & 52.2±1.4 & 50.9±2.8 & 51.6±5.7 \\
    \textsc{AdamG} & 90.9±0.6 & 82.6±0.6\&72.5±1.4 & 80.8±0.4\&85.6±0.1 & 78.8±0.1\&79.9±0.2 & 86.0±0.5 & 58.0±4.9 & 49.8±5.8 \\
    \bottomrule
    \end{tabular*}
    \caption{Performance of fine-tuning LoRA on GPT-2 with GLUE benchmark under 3 different seeds.}
    \label{app_tab_gpt2}
\end{threeparttable}
\end{table*}


\begin{table*}[!ht]
    \centering
    \small
    \begin{threeparttable}
    \setlength{\tabcolsep}{3pt}
    \begin{tabular*}{\textwidth}{@{\extracolsep{\fill}} l *{10}{c} }
    \toprule
    \multirow{3}{*}{Dataset} & \multirow{3}{*}{Algorithm}  & \multicolumn{8}{c}{Test accuracy (\%)} \\
    & & \multicolumn{4}{c}{Epoch 20\&pretrained network} & \multicolumn{4}{c}{Epoch 100\&randomly init. network} \\
    \cmidrule(l{4pt}r{4pt}){3-6}
    \cmidrule(l{4pt}r{4pt}){7-10}
     & & DenseNet & ResNet & ViT-B &  VGG & DenseNet & ResNet & ViT-B &  VGG \\
    \cmidrule{1-10}
     \multirow{2}{*}{CIFAR-10} & \colorbox{gray!30}{\textsc{AdamG}} &  86.1±0.3 & 91.1±0.4 & 78.6±0.4 & 87.3±0.0 & 68.1±0.6$^{\times}$ & 75.9±0.6$^{\times}$ & 58.1±0.3 & 77.4±0.4\\
    & \textsc{AdamG} ($p = 0.5$) & 87.4±0.2&92.9±0.6&79.6±0.2&87.4±0.5&74.5±0.4$^{\times}$&82.4±0.2&58.8±0.1&78.8±0.2\\
    \cmidrule{1-10}
     \multirow{2}{*}{CIFAR-100} & \colorbox{gray!30}{\textsc{AdamG}} & 62.6±0.2 & 70.4±1.3 & 54.5±0.1 & 63.1±0.1 & 35.4±0.0$^{\times}$ & 44.9±0.4$^{\times}$ & 31.5±0.2 & 42.1±0.4 \\
    & \textsc{AdamG} ($p = 0.5$) & 65.0±0.2&74.0±1.4&54.9±0.3&63.7±0.3 &43.8±0.1&53.4±0.2&32.5±0.2&42.7±1.3\\
    \cmidrule{1-10}
     \multirow{2}{*}{Tiny-ImageNet} & \colorbox{gray!30}{\textsc{AdamG}} &  62.7±0.1 & 64.2±1.4 & 60.2±0.3 & 59.7±0.3 & 26.3±0.4$^{\times}$ & 39.0±0.2$^{\times}$ & 16.9±0.1 & 35.9±0.4 \\
    & \textsc{AdamG} ($p = 0.5$) &63.6±0.2&65.4±1.3&64.1±0.5&57.7±0.0&34.8±0.1&45.4±0.3&17.9±0.3&33.7±0.6\\
    \bottomrule
    \end{tabular*}  
    \caption{Test accuracy with CIFAR-10, CIFAR-100, and Tiny-ImageNet under 3 different seeds. }
    \label{append_tab_cifar10}
\end{threeparttable}
\end{table*}

\begin{table*}[!ht]
    \centering
    \small
    \begin{threeparttable}
    \setlength{\tabcolsep}{1.5pt}
    \begin{tabular*}{\textwidth}{@{\extracolsep{\fill}} l *{10}{c} }
    \toprule
      \multirow{2}{*}{Algorithm} & CoLA & SST-2 & MRPC & STS-B & QQP & MNLI & QNLI &  RTE & WNLI\\
     \cmidrule(l{4pt}r{4pt}){2-10}
      & Matthews corr. & Acc. & F1 & Pearson corr.  & F1 & Matched Acc. & Acc. &  Acc. & Acc.\\
    \cmidrule{1-10}
    \multicolumn{10}{l}{BERT} \\
    \cmidrule{1-10}
     \colorbox{gray!30}{\textsc{AdamG}} & 50.6±3.2$^{\times}$ & 90.9±0.4 & 87.0±2.1 & 88.7±0.6 & 87.1±0.1 & 83.9±0.4 & 89.8±0.3 & 65.2±3.5 & 52.1±6.0 \\
    \textsc{AdamG} ($p = 0.5$) & 0.0±0.0$^{\times}$ & 50.9±0.0$^{\times}$ & 81.2±0.0$^{\times}$ & 26.3±29.5$^{\times}$ & 0.0±0.0$^{\times}$ & 36.6±6.1$^{\times}$ & 49.5±0.0$^{\times}$ & 50.9±2.6$^{\times}$ & 52.1±6.0 \\
    \cmidrule{1-10}
    \multicolumn{10}{l}{LoRA on GPT-2 } \\
    \cmidrule{1-10}
    \colorbox{gray!30}{\textsc{AdamG}} & 24.2±5.0 & 90.9±0.6 & 82.6±0.6 & 83.9±0.5 & 80.8±0.4 & 78.8±0.1 & 86.0±0.5 & 58.0±4.9 & 49.8±5.8 \\
    \textsc{AdamG} ($p = 0.5$) & 31.0±7.4 & 90.4±0.4 & 82.8±0.7 & 85.4±0.8 & 82.7±0.3 & 79.9±0.2 & 86.7±0.7 & 56.4±1.2 & 52.6±5.3\\
    \bottomrule
    \end{tabular*}
    \caption{Performance when fine-tuning pretrained BERT and GPT-2 with LoRA on the GLUE benchmark over three random seeds.}
    \label{append_tab_bert}
\end{threeparttable}
\end{table*}

\begin{table*}[!ht]
    \centering
    \small
    \begin{threeparttable}
    \begin{tabular*}{\textwidth}{@{\extracolsep{\fill}} l *{7}{c} }
    \toprule
     Metrics  & Algorithm & Adam(1e-2) & Adam(1e-3) & Adam(1e-4) &  Adam(1e-5) & Avg.\\
    \cmidrule{1-7}
    \multirow{2}{*}{Reliability ratio} & \colorbox{gray!30}{\textsc{AdamG}} &  2/5  &  11/14  &  15/15  &  7/8   & \textbf{0.76}\\
    & \textsc{AdamG} ($p=0.5$)  &  4/5  &  13/14  &  15/15  &  0/8 & 0.68\\
    \bottomrule
    \end{tabular*}    
    \caption{\textit{Reliability} ratio comparison, which is derived from Table~\ref{append_tab_cifar10} and Table~\ref{append_tab_bert}.}
    \label{append_tab_stability}
\end{threeparttable}
\end{table*}

\clearpage
\subsection{Figures}
\label{append_loss_img}
The training loss curves corresponding to Table~\ref{tab_cifar10} and Table~\ref{tab_bert} are demonstrated as Figure~\ref{app_fig_resnet10}, Figure~\ref{app_fig_resnet100},  Figure~\ref{app_fig_TINYIMAGENET}, and Figure~\ref{fig_lan_exp_comb}.

\begin{figure*}[!ht]
\begin{center}
\subfigure[DenseNet121\&Pretrained]{\includegraphics[width=0.2\linewidth]{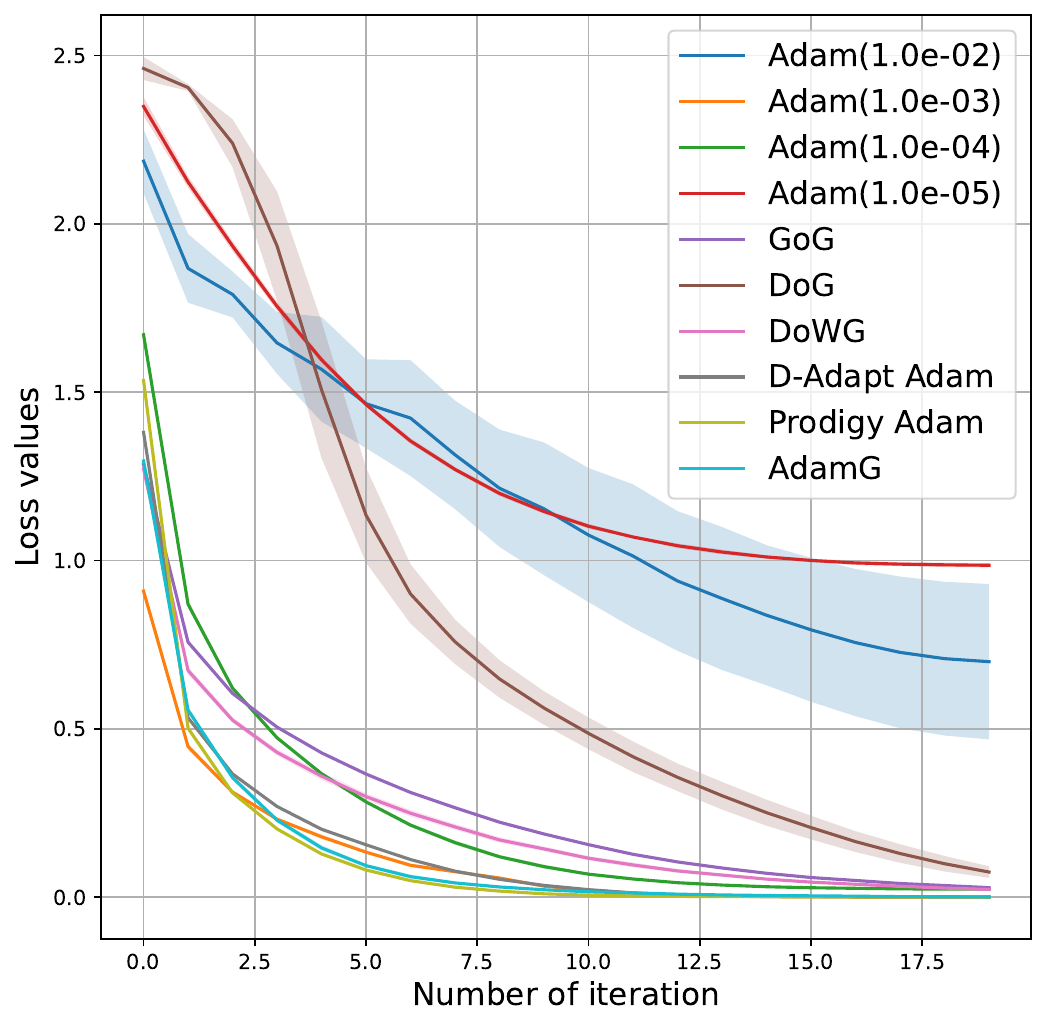}}
\subfigure[ResNet18\&Pretrained]{\includegraphics[width=0.2\linewidth]{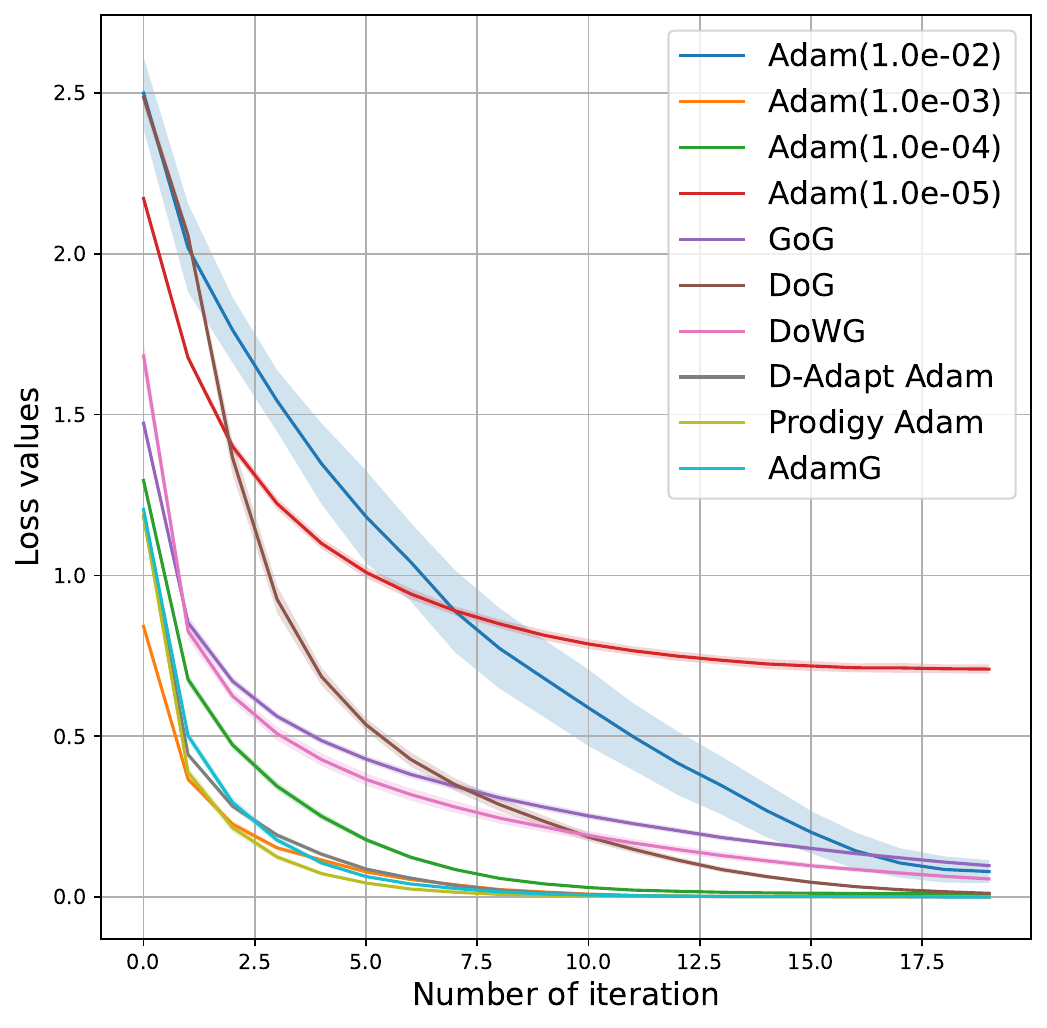}}
\subfigure[ViT-B/16\&Pretrained]{\includegraphics[width=0.2\linewidth]{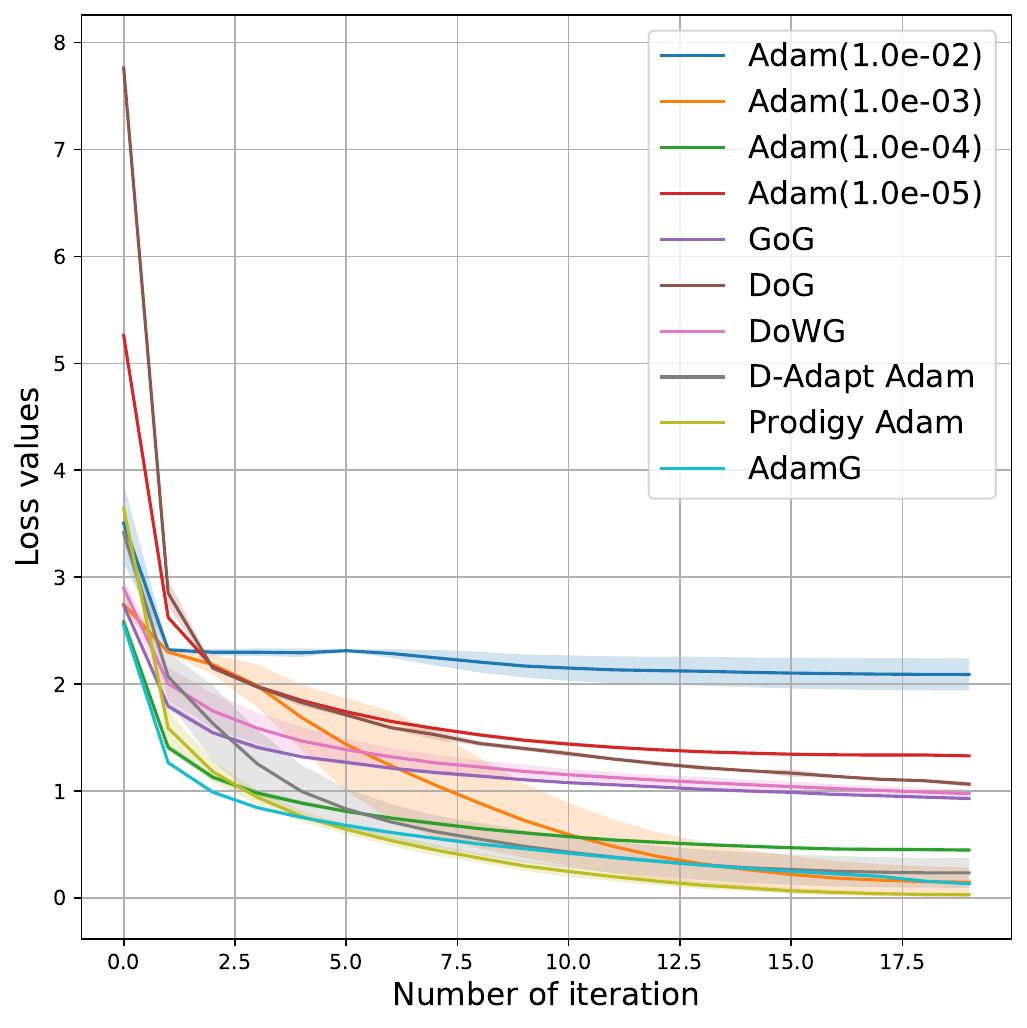}}
\subfigure[VGG11\&Pretrained]{\includegraphics[width=0.2\linewidth]{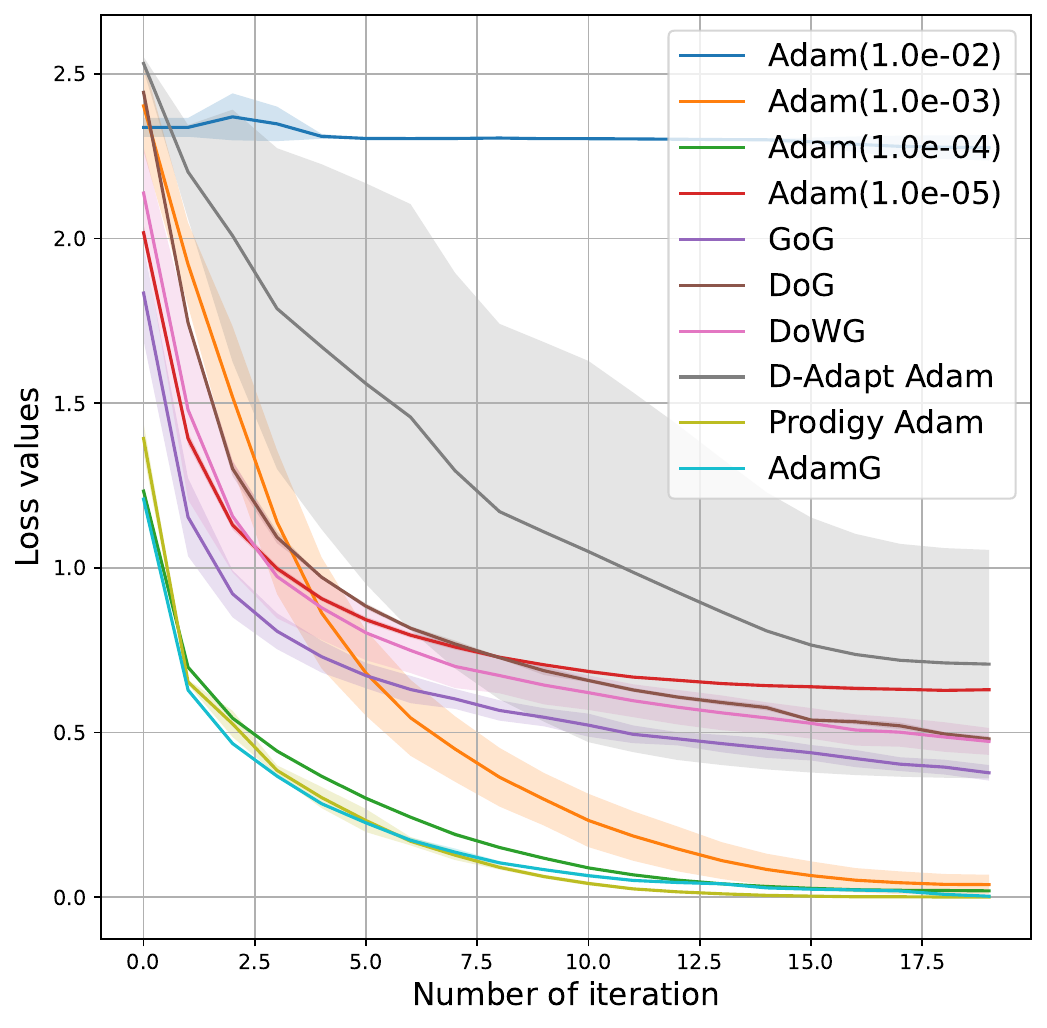}}
\subfigure[DenseNet121\&R.I.]{\includegraphics[width=0.2\linewidth]{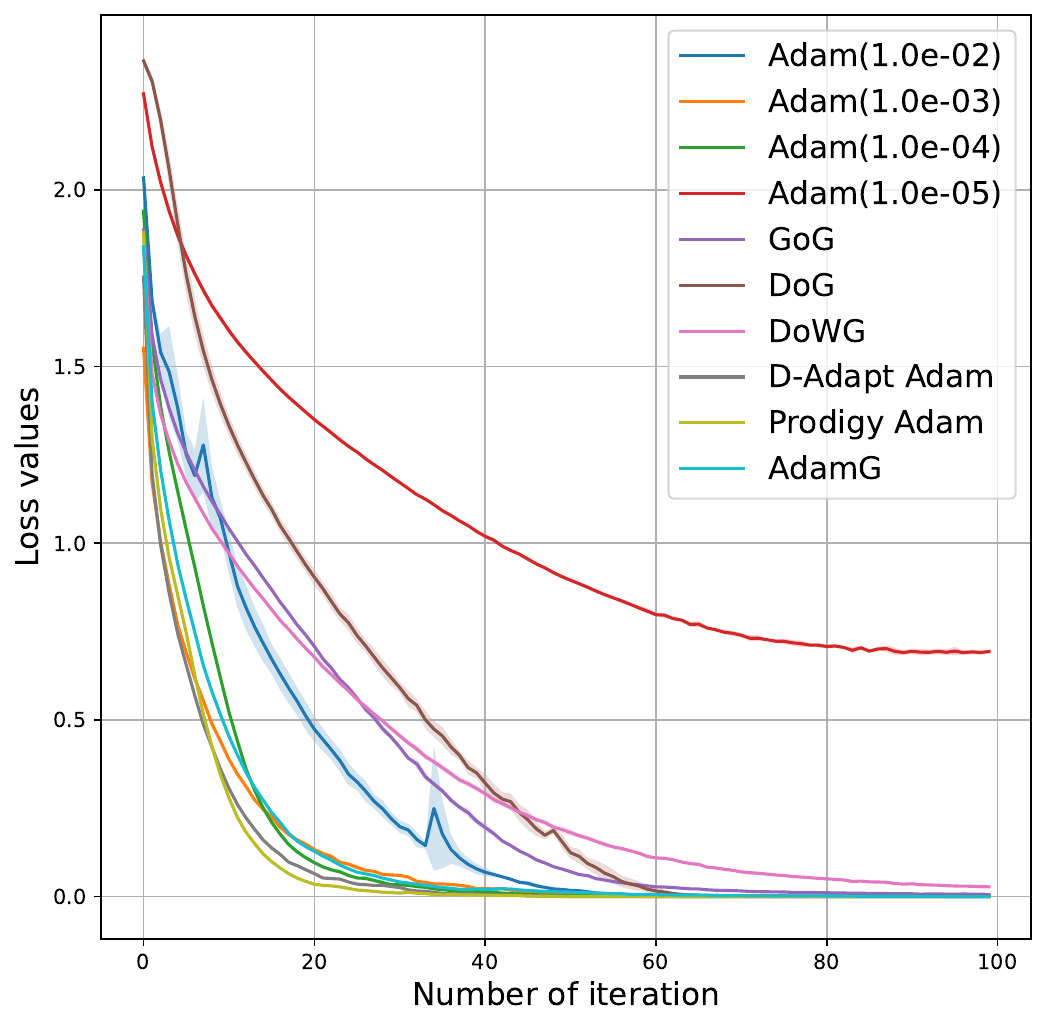}}
\subfigure[ResNet18\&R.I.]{\includegraphics[width=0.2\linewidth]{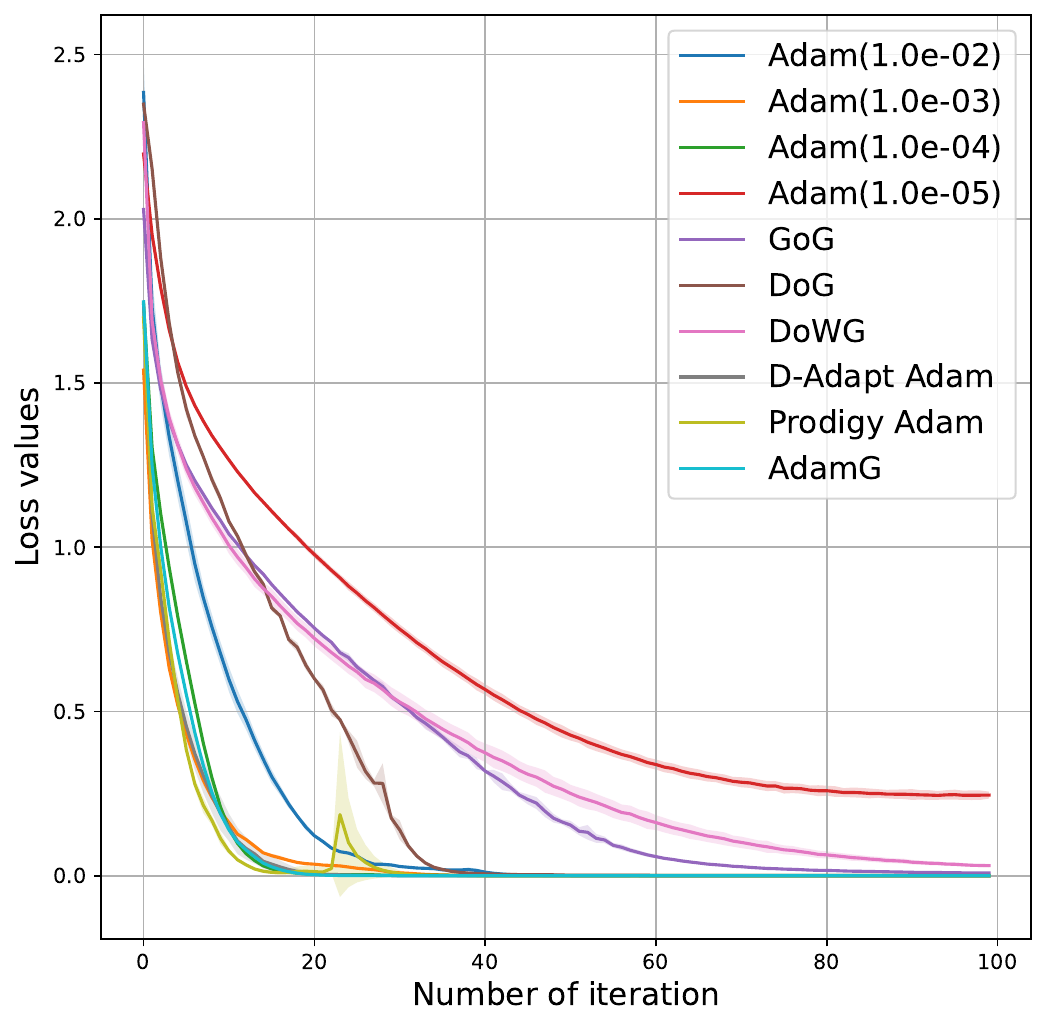}}
\subfigure[ViT-B/16\&R.I.]{\includegraphics[width=0.2\linewidth]{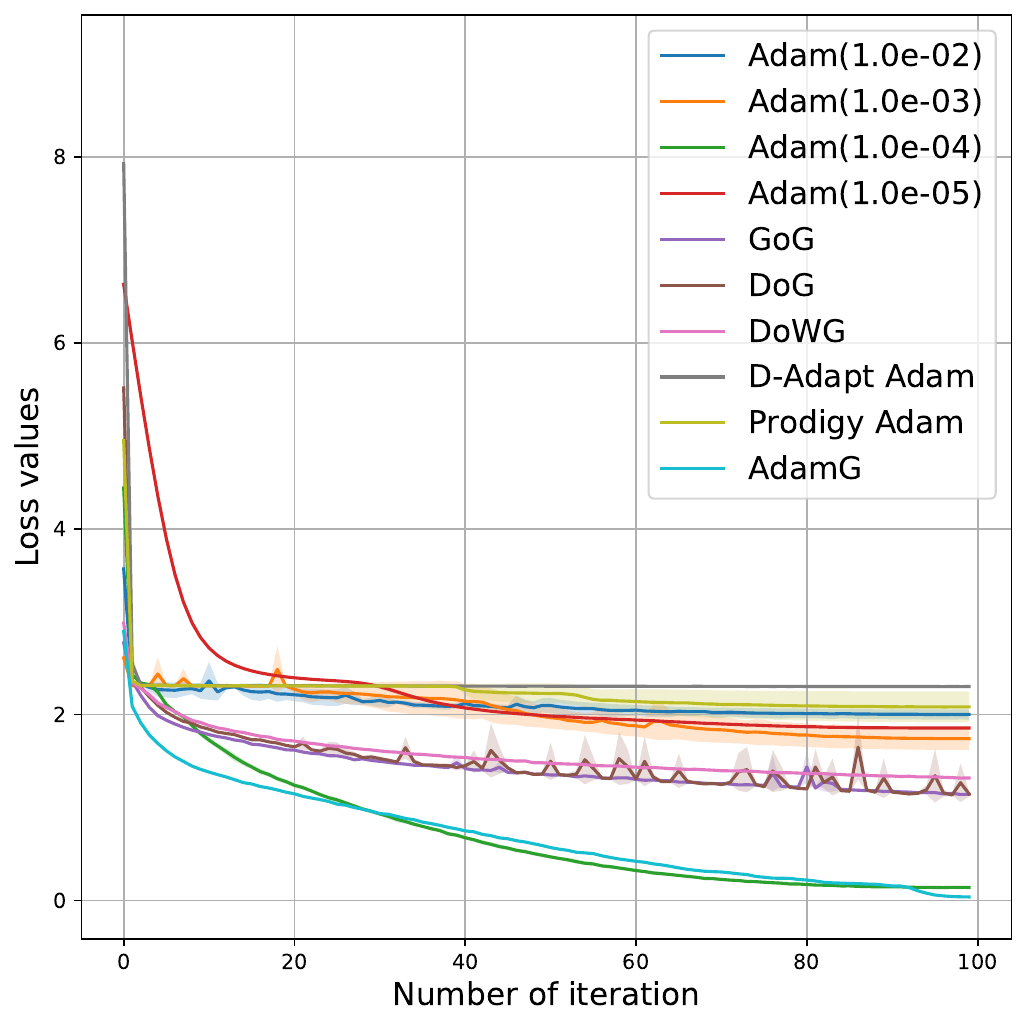}}
\subfigure[VGG11\&R.I.]{\includegraphics[width=0.2\linewidth]{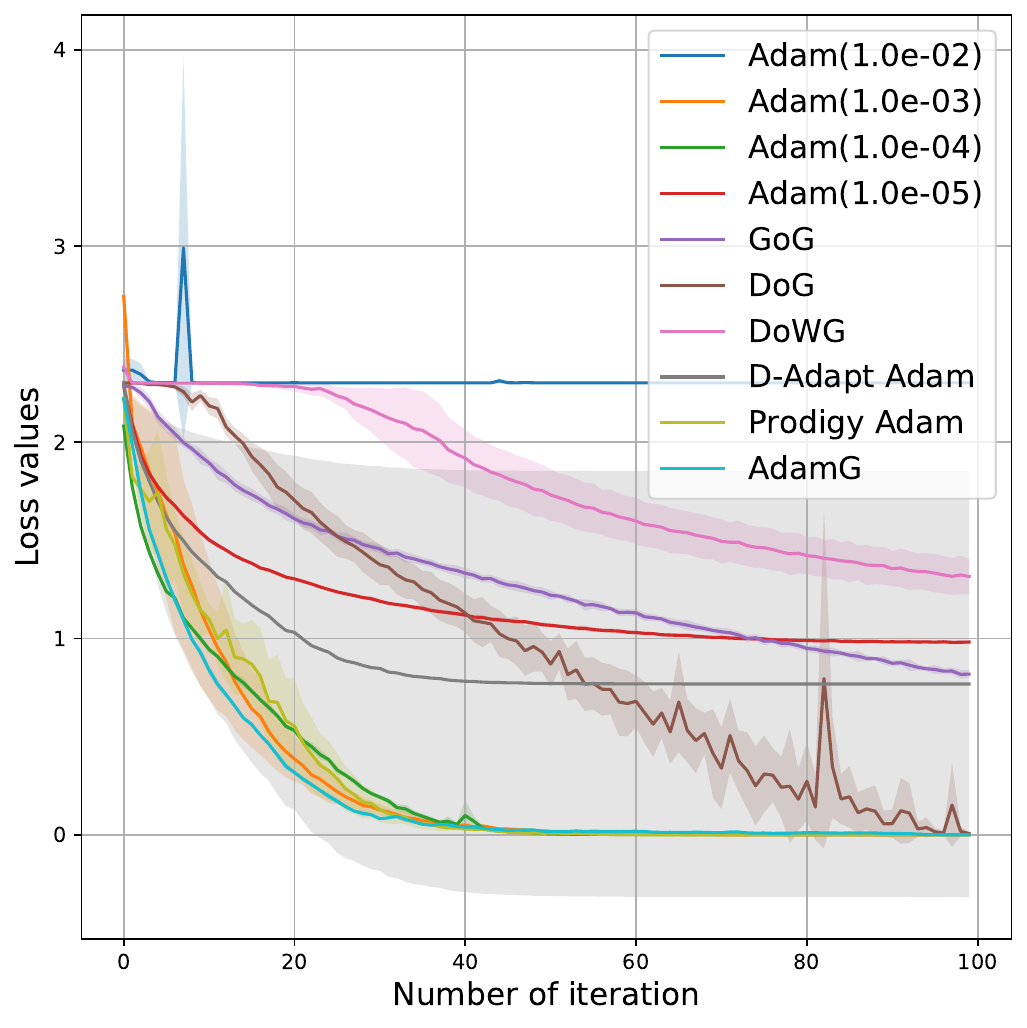}}
\end{center}
\caption{CIFAR-10 experiments. R.I. denotes randomly initialized networks.
\label{app_fig_resnet10}}
\end{figure*}

\begin{figure*}[!ht]
\begin{center}
\subfigure[DenseNet121\&Pretrained]{\includegraphics[width=0.2\linewidth]{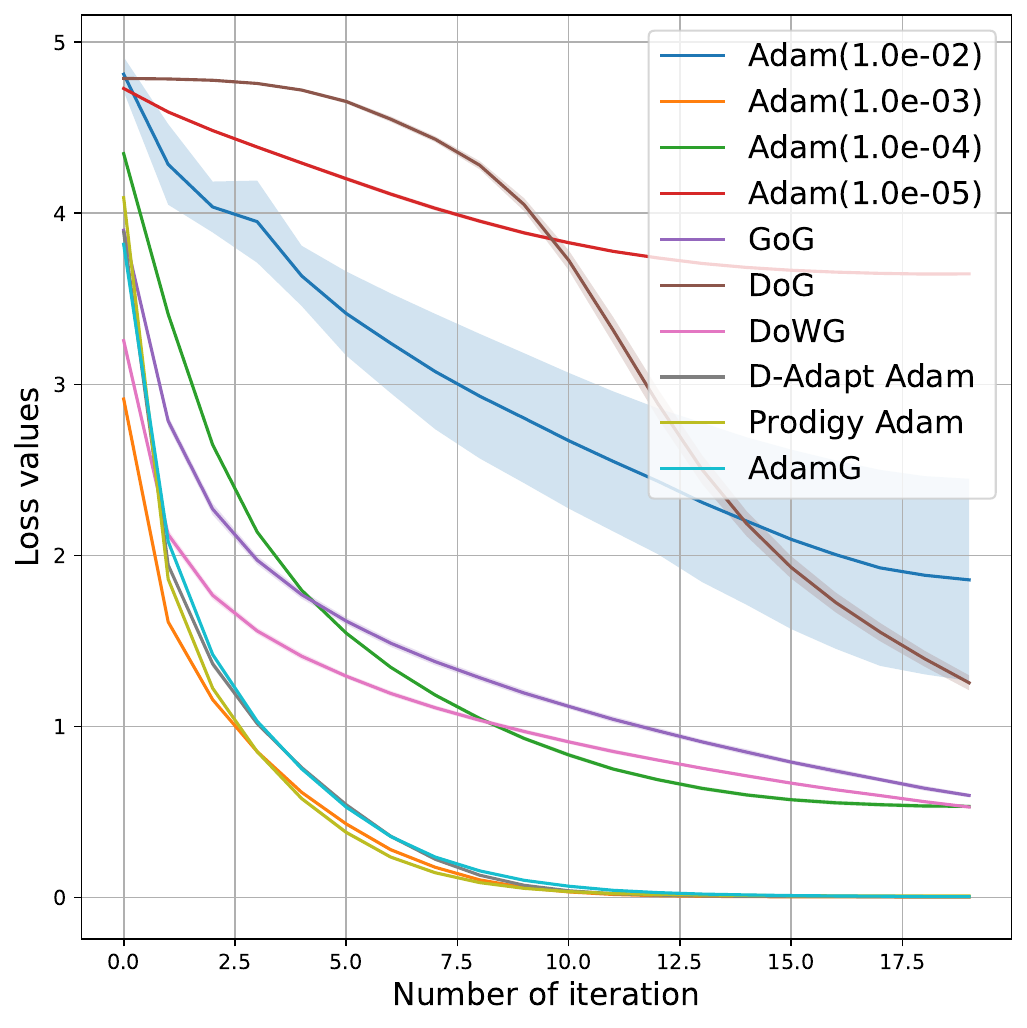}}
\subfigure[ResNet18\&Pretrained]{\includegraphics[width=0.2\linewidth]{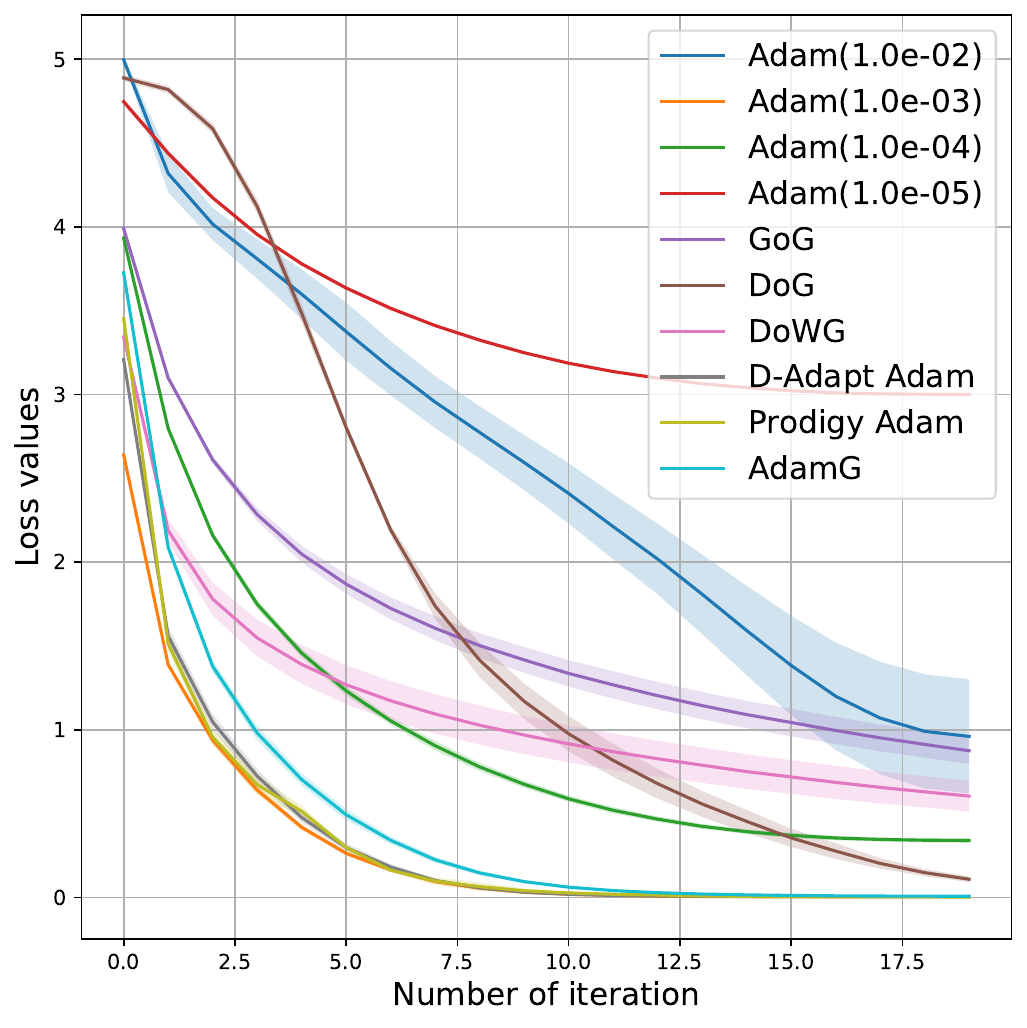}}
\subfigure[ViT-B/16\&Pretrained]{\includegraphics[width=0.2\linewidth]{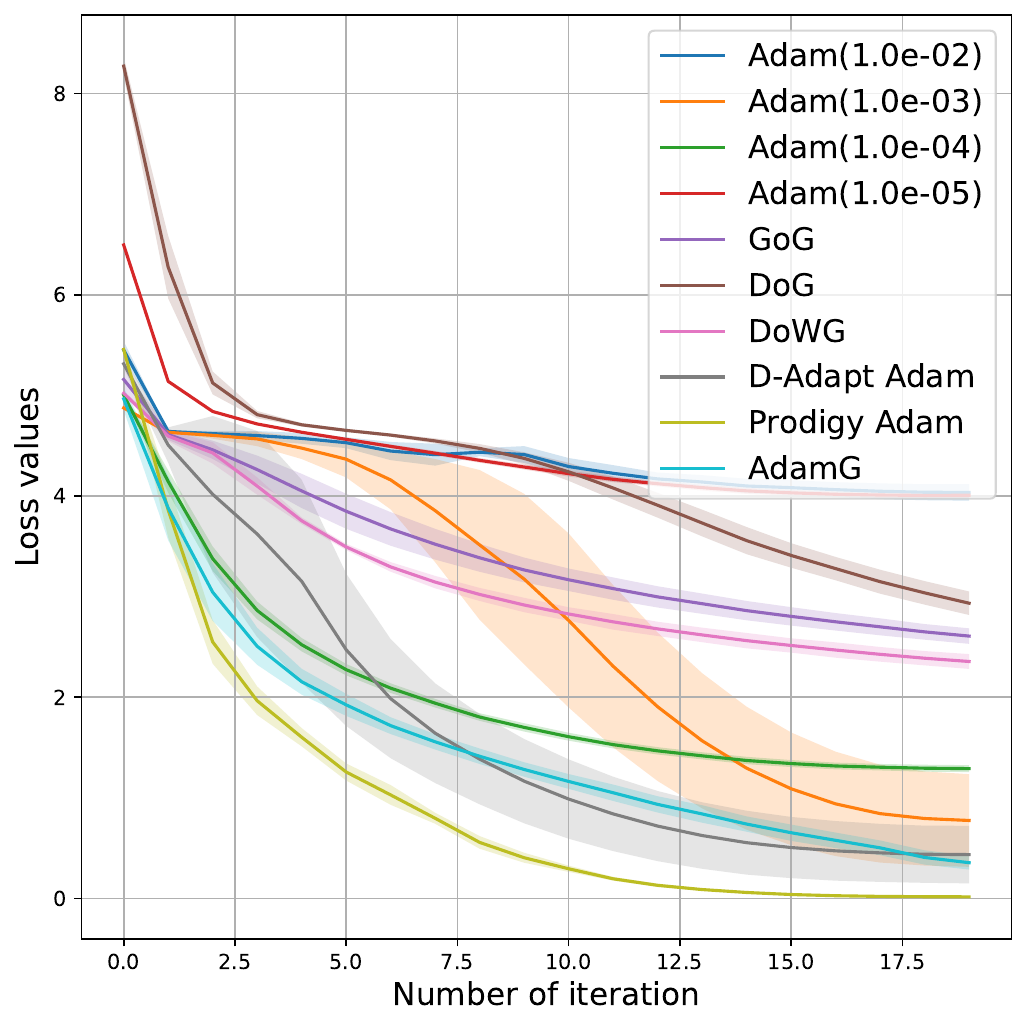}}
\subfigure[VGG11\&Pretrained]{\includegraphics[width=0.2\linewidth]{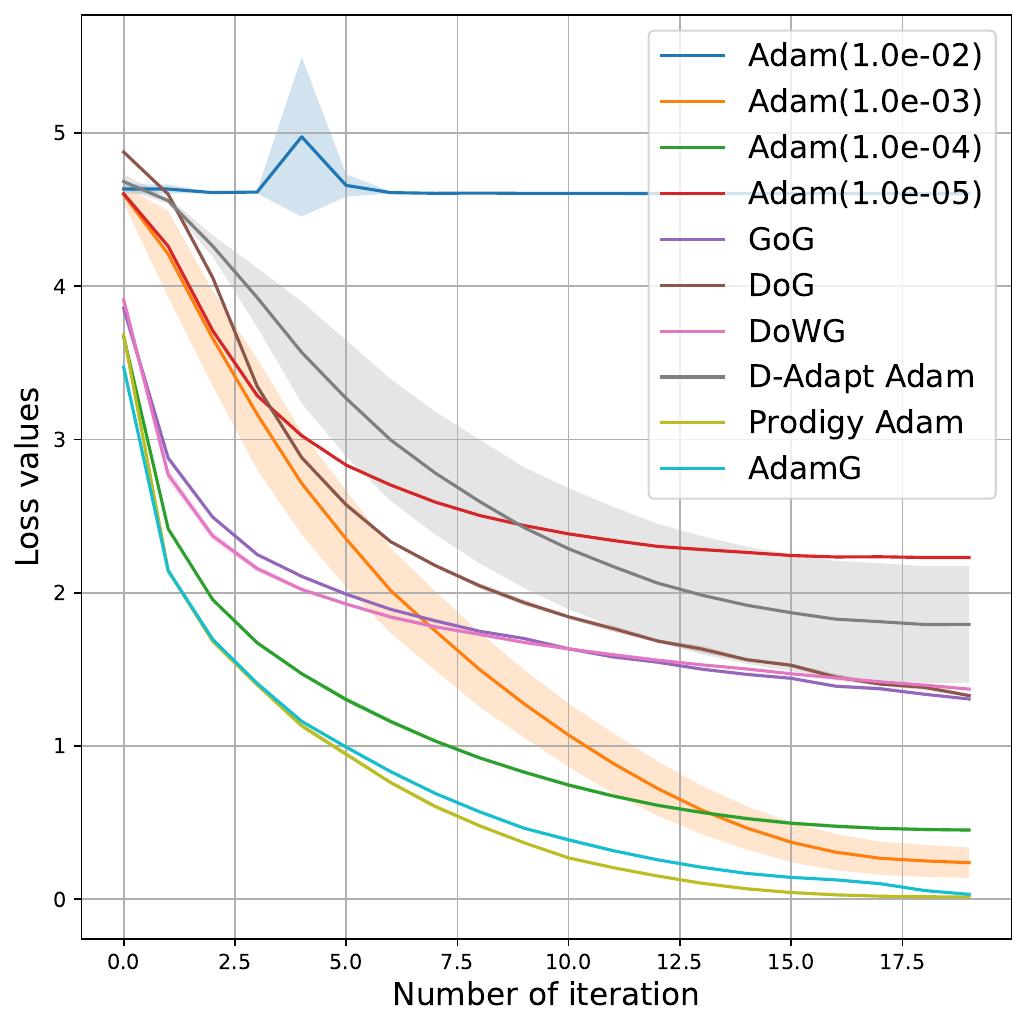}}
\subfigure[DenseNet121\&R.I.]{\includegraphics[width=0.2\linewidth]{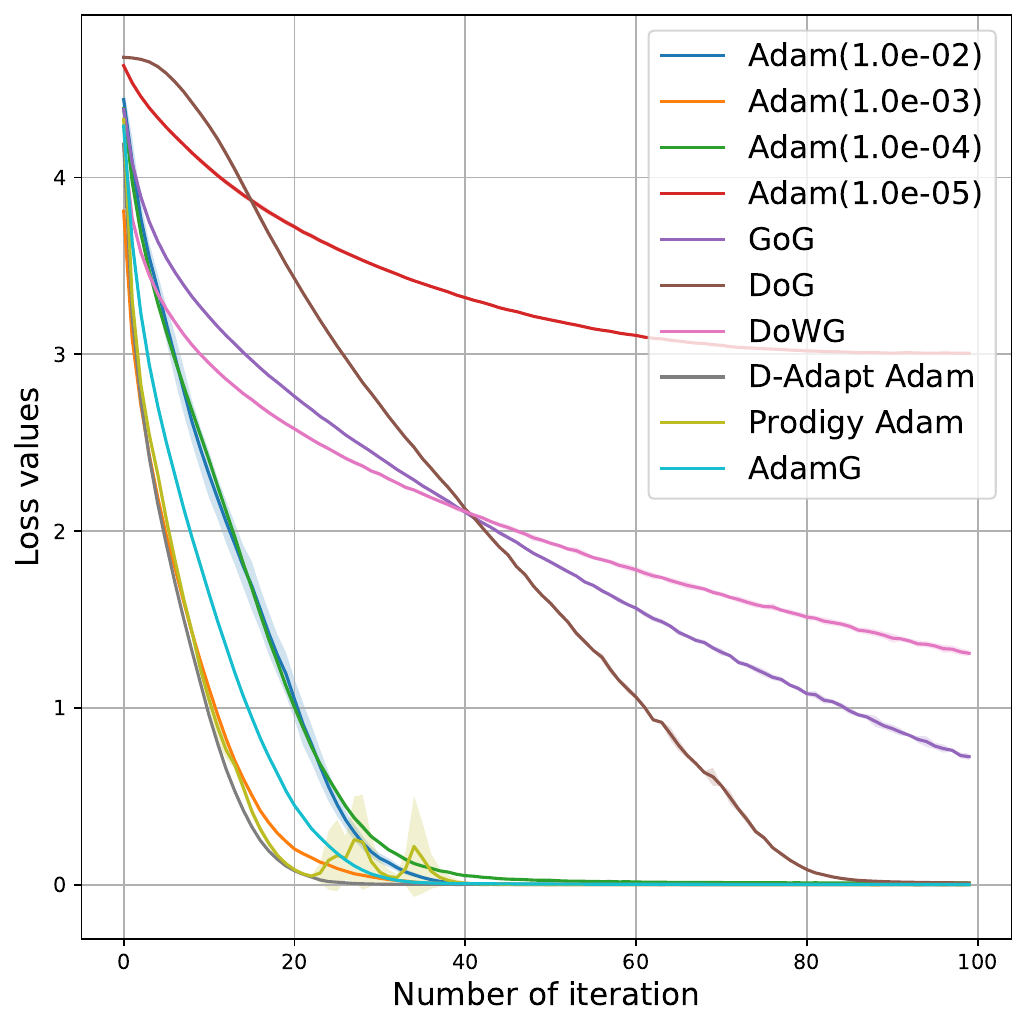}}
\subfigure[ResNet18\&R.I.]{\includegraphics[width=0.2\linewidth]{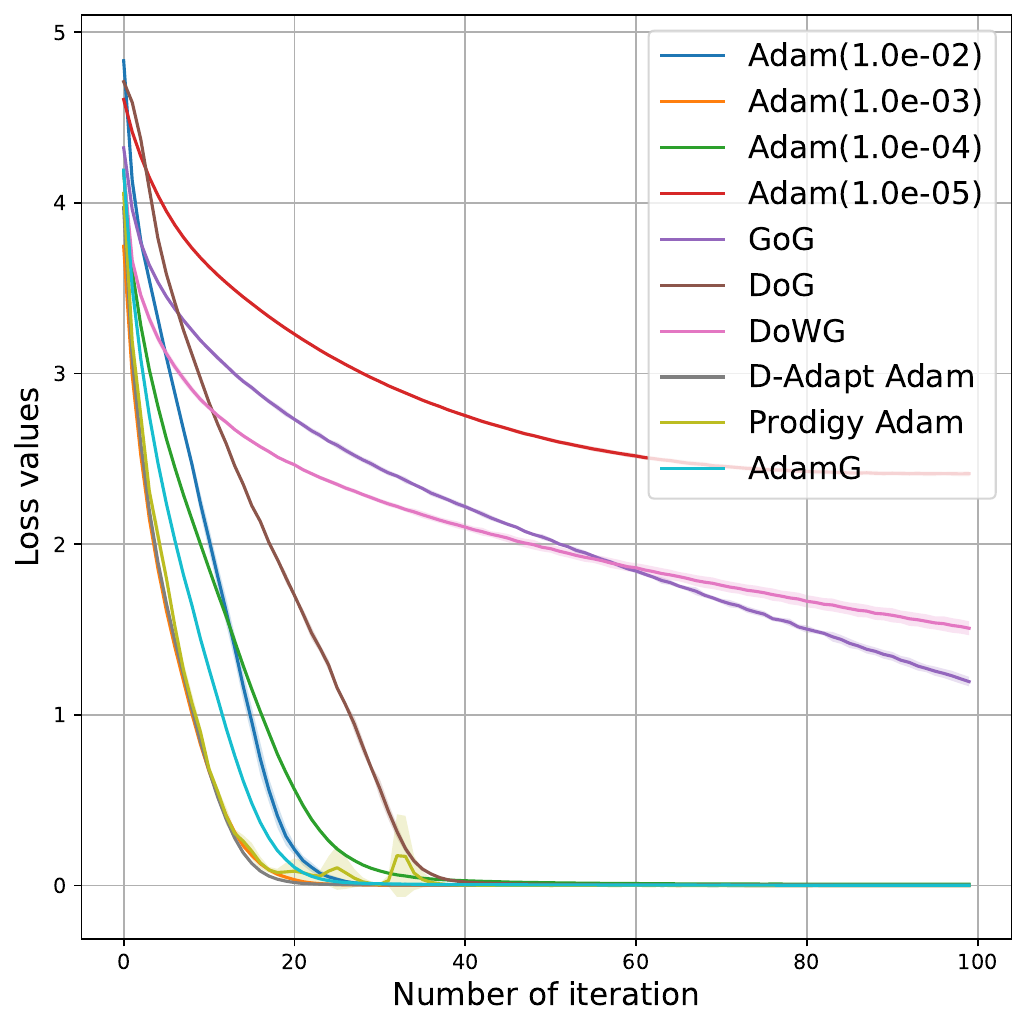}}
\subfigure[ViT-B/16\&R.I.]{\includegraphics[width=0.2\linewidth]{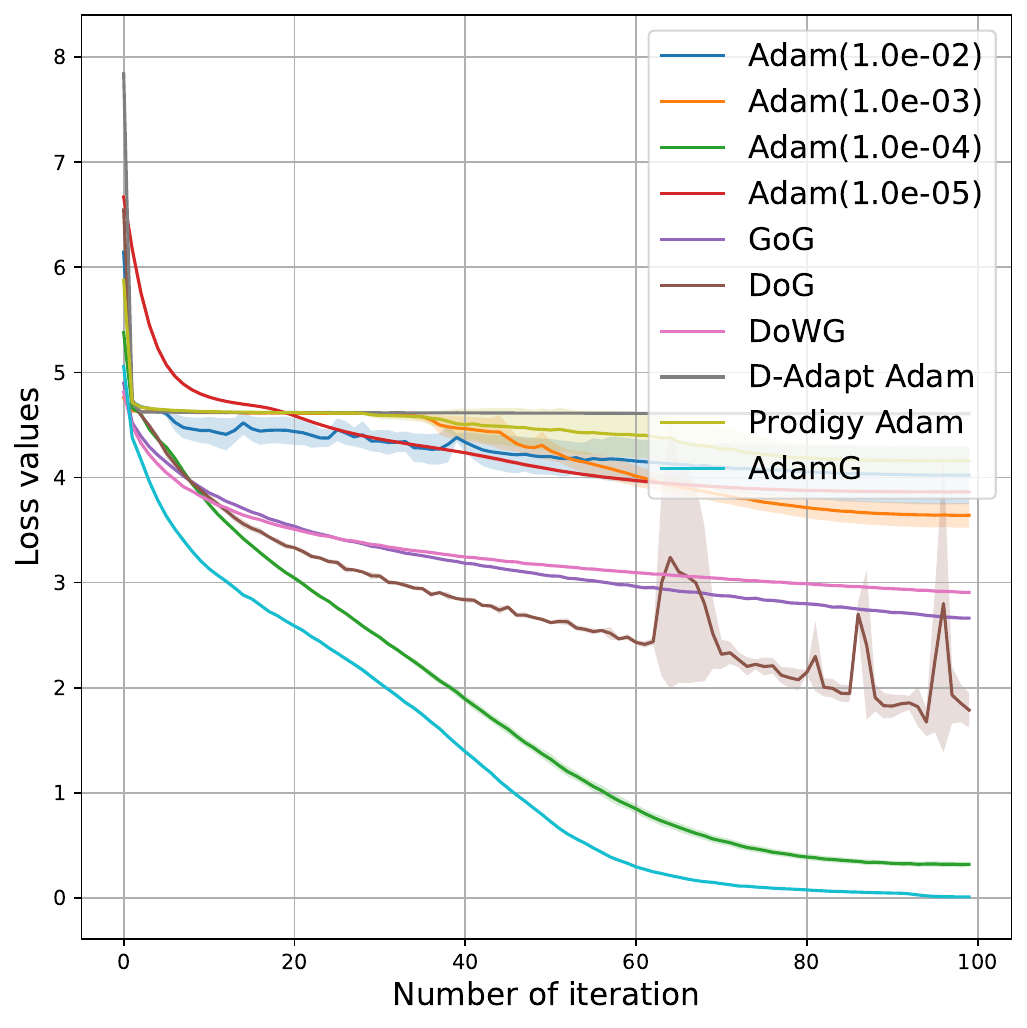}}
\subfigure[VGG11\&R.I.]{\includegraphics[width=0.2\linewidth]{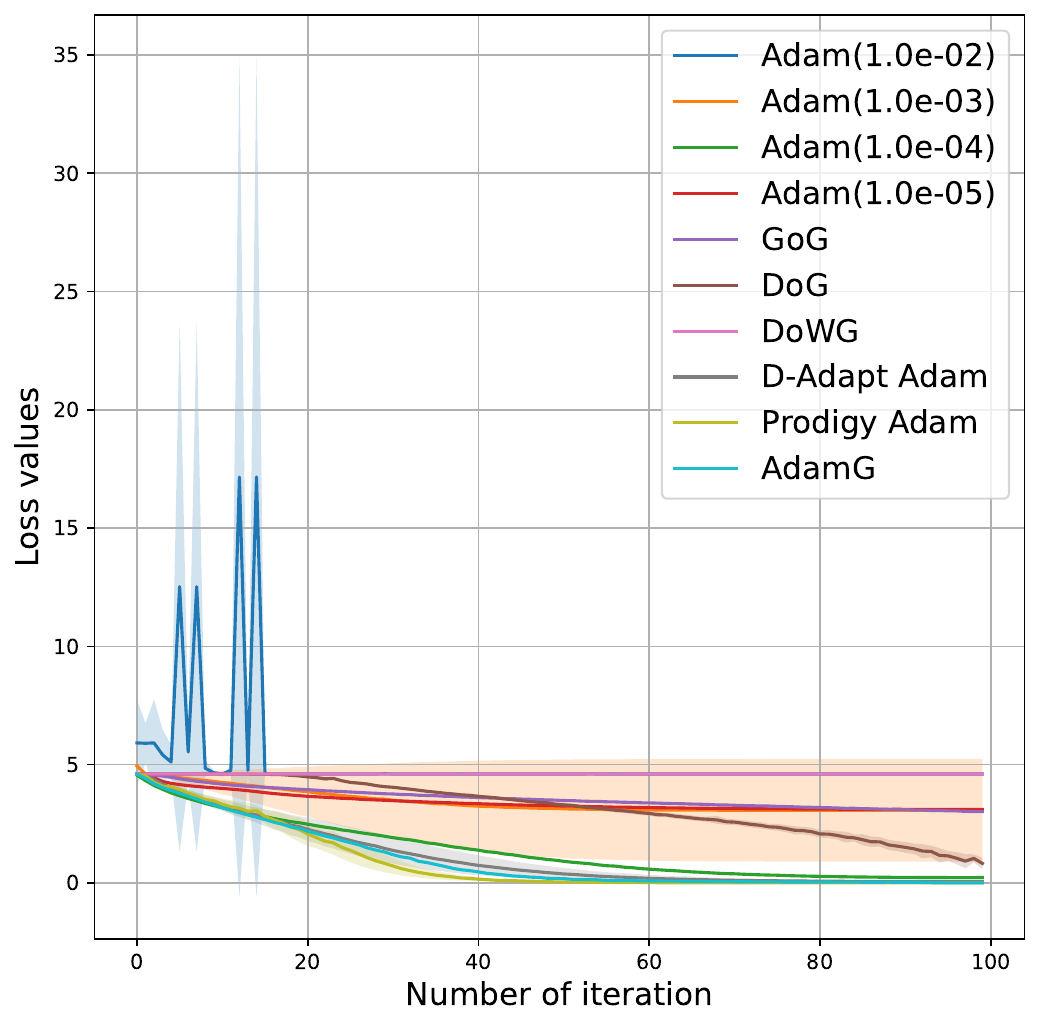}}
\end{center}
\caption{CIFAR-100 experiments. R.I. denotes randomly initialized networks.
\label{app_fig_resnet100}}
\end{figure*}

\begin{figure*}[!ht]
\begin{center}
\subfigure[DenseNet121\&Pretrained]{\includegraphics[width=0.24\linewidth]{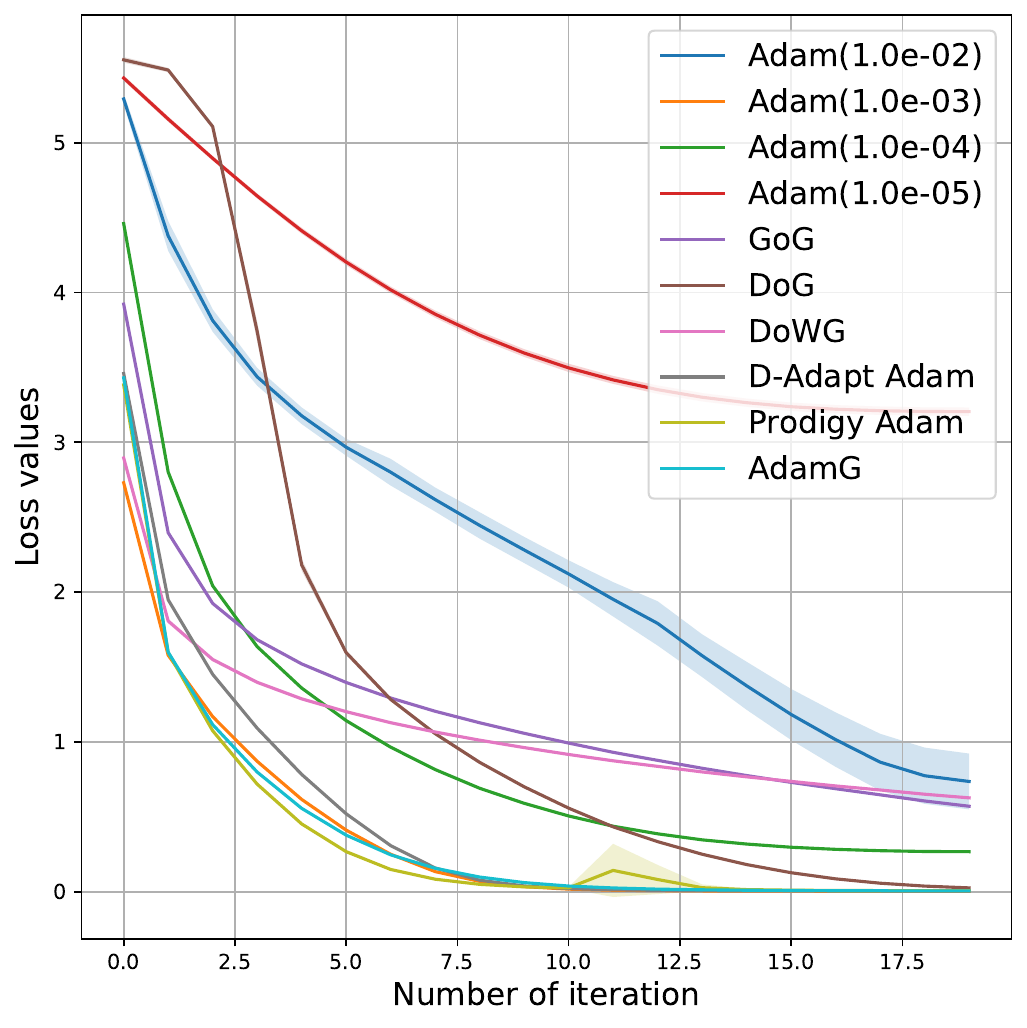}}
\subfigure[ResNet18\&Pretrained]{\includegraphics[width=0.24\linewidth]{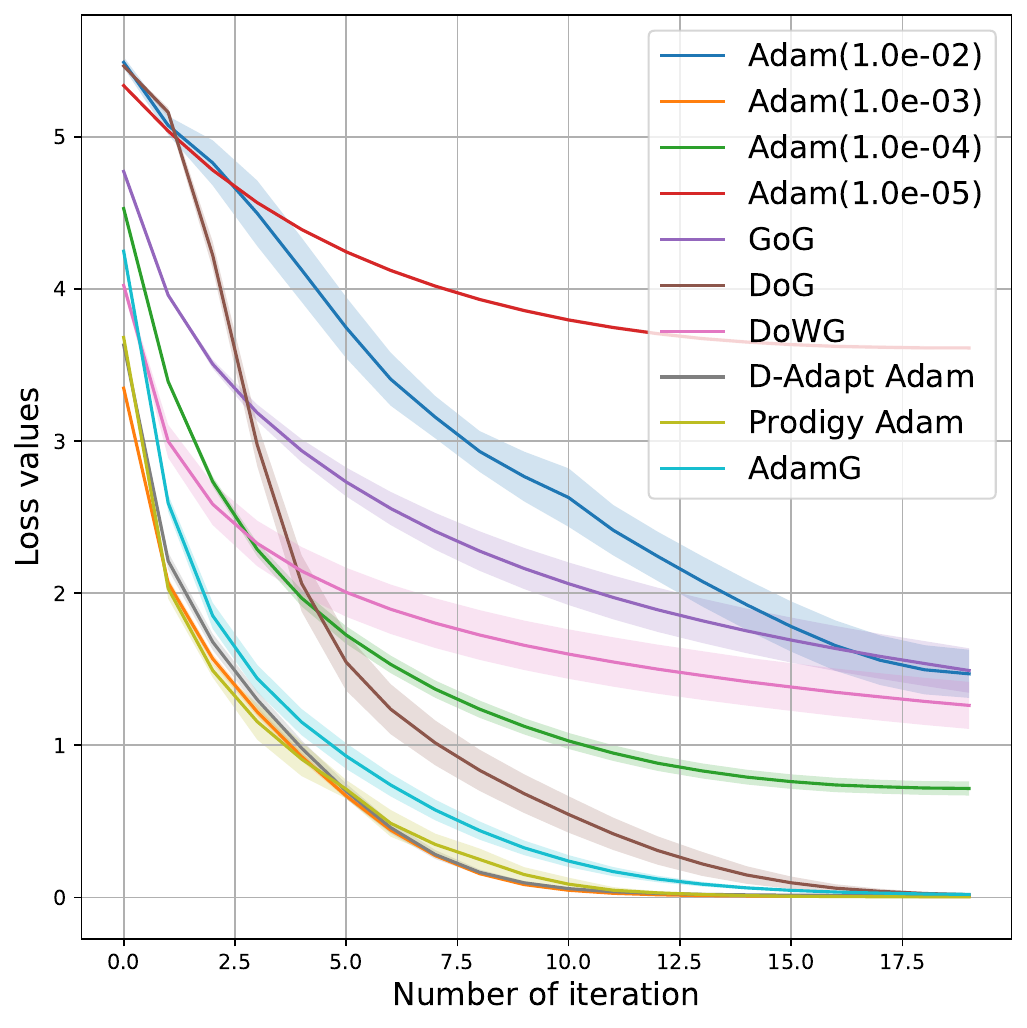}}
\subfigure[ViT-B/16\&Pretrained]{\includegraphics[width=0.24\linewidth]{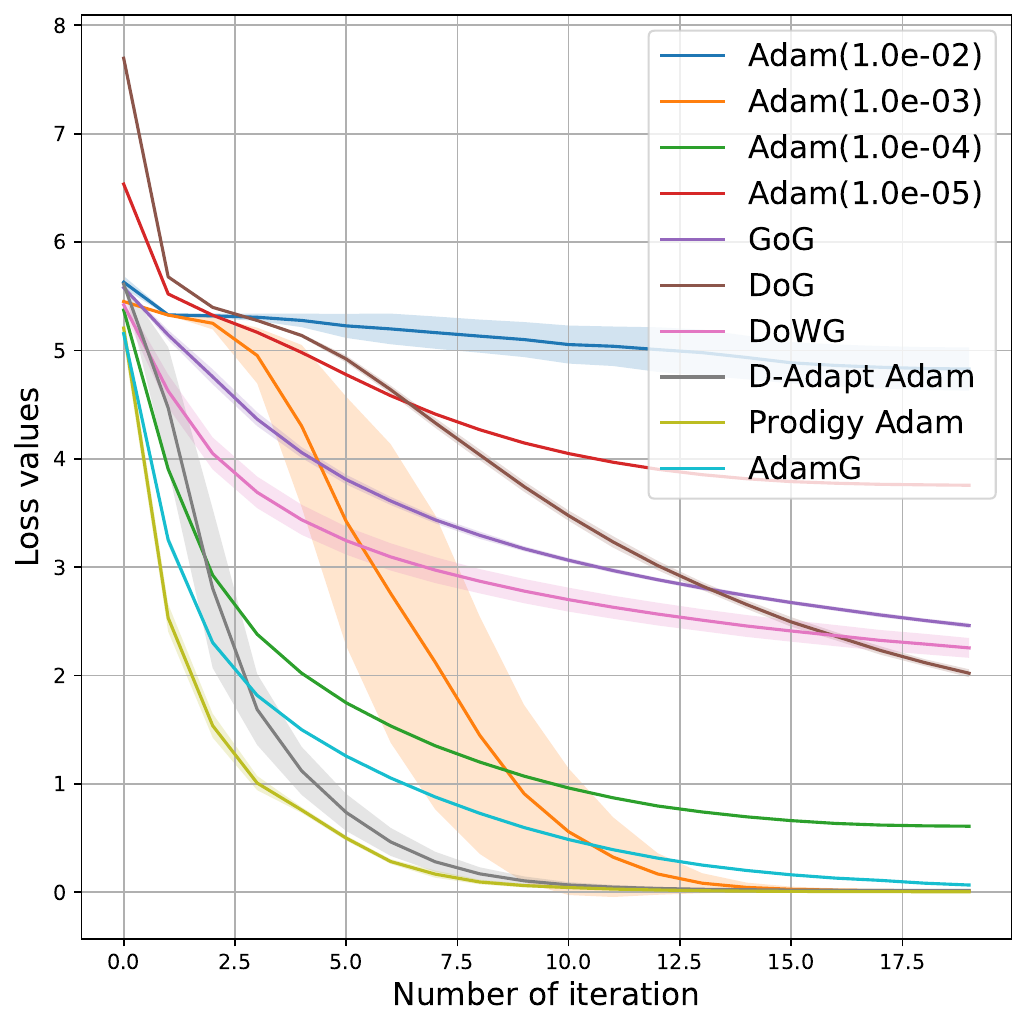}}
\subfigure[VGG11\&Pretrained]{\includegraphics[width=0.24\linewidth]{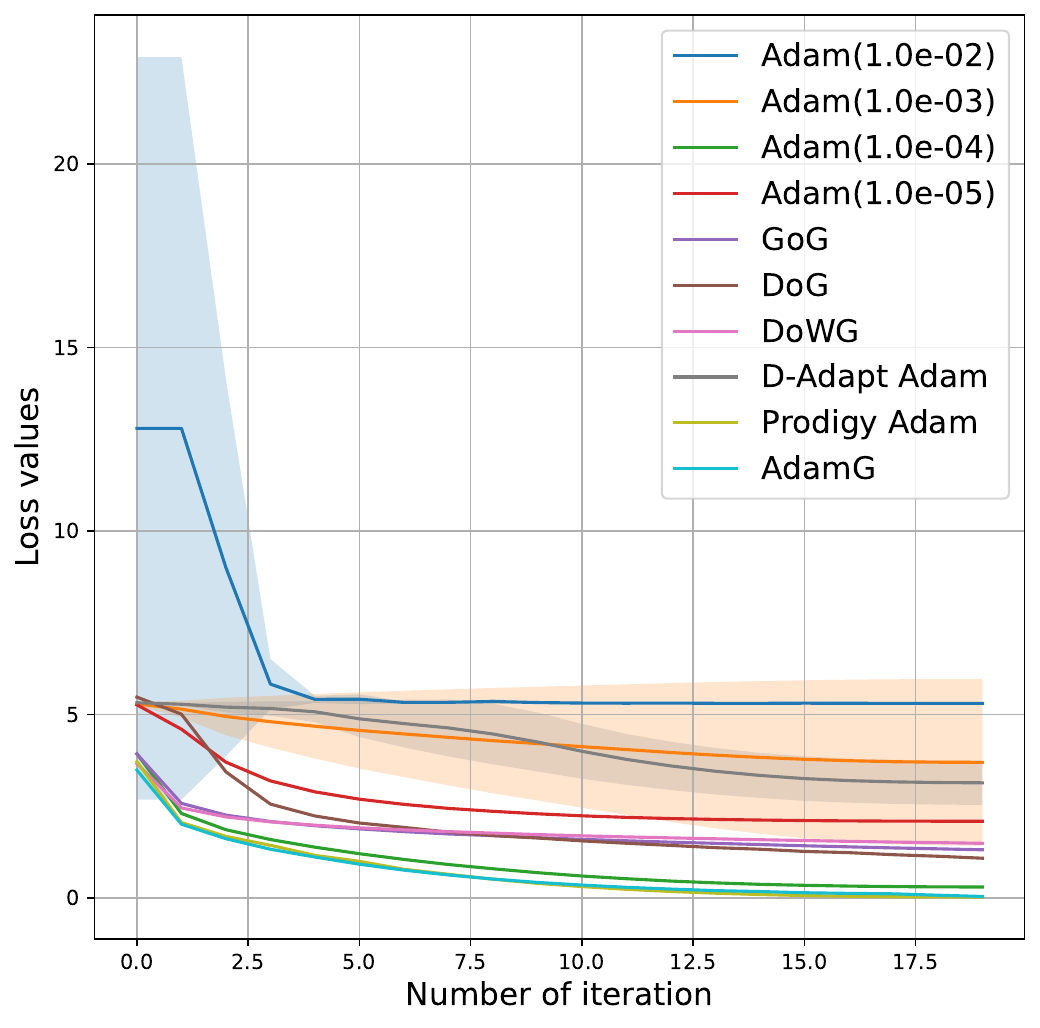}}
\subfigure[DenseNet121\&R.I.]{\includegraphics[width=0.24\linewidth]{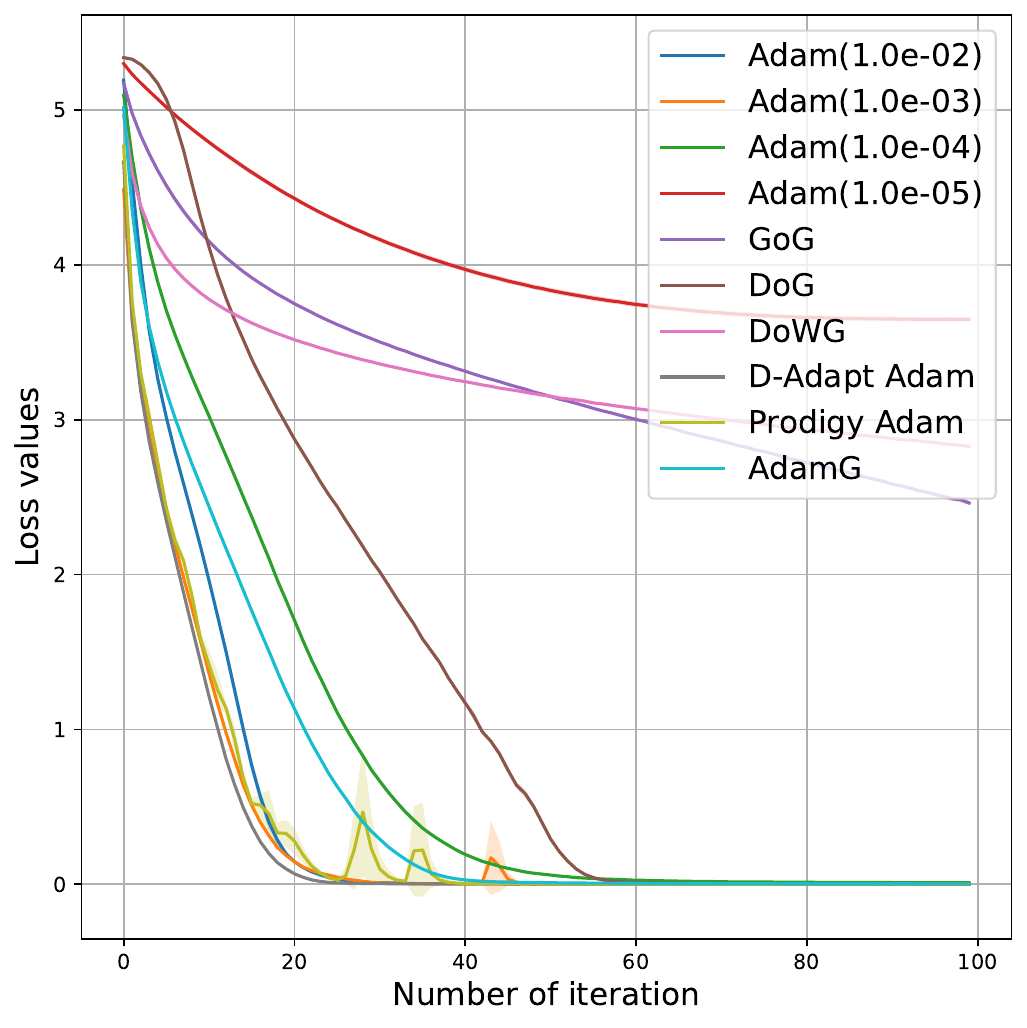}}
\subfigure[ResNet18\&R.I.]{\includegraphics[width=0.24\linewidth]{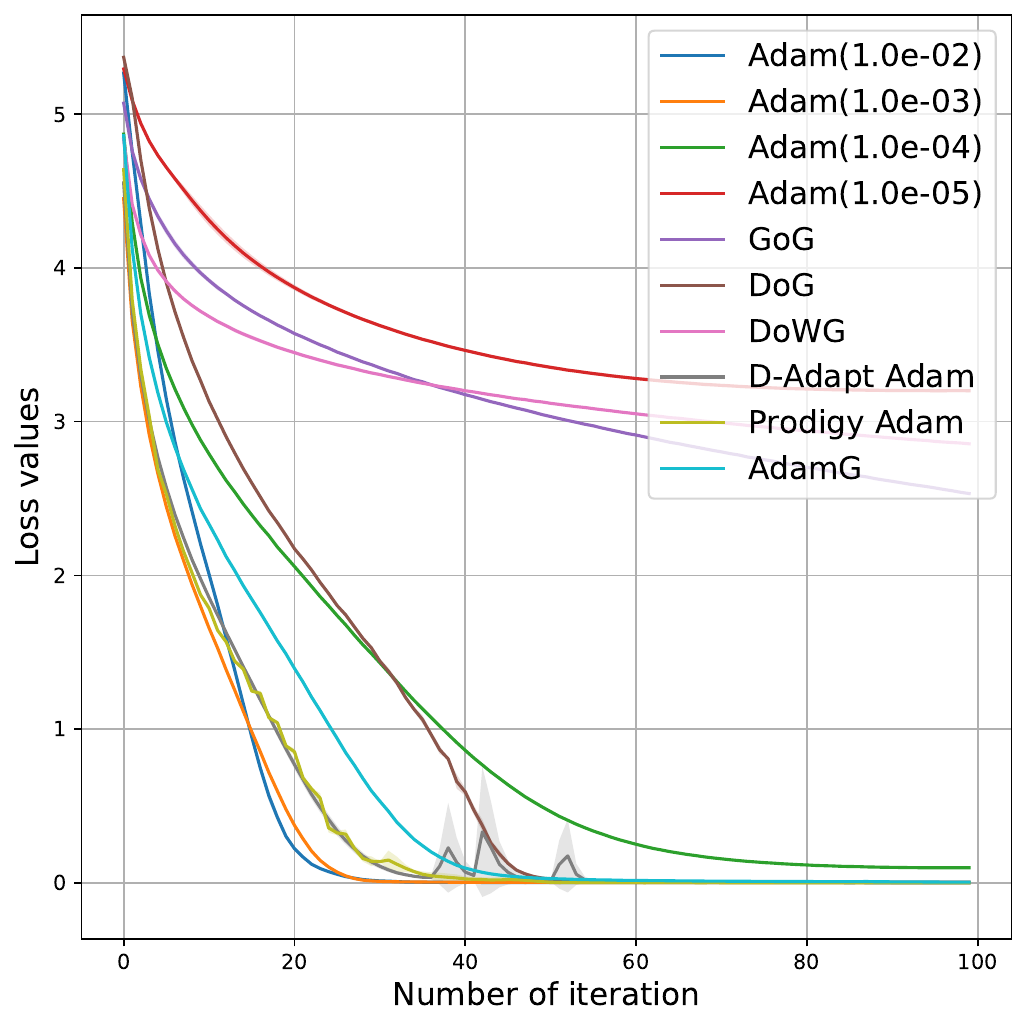}}
\subfigure[ViT-B/16\&R.I.]{\includegraphics[width=0.24\linewidth]{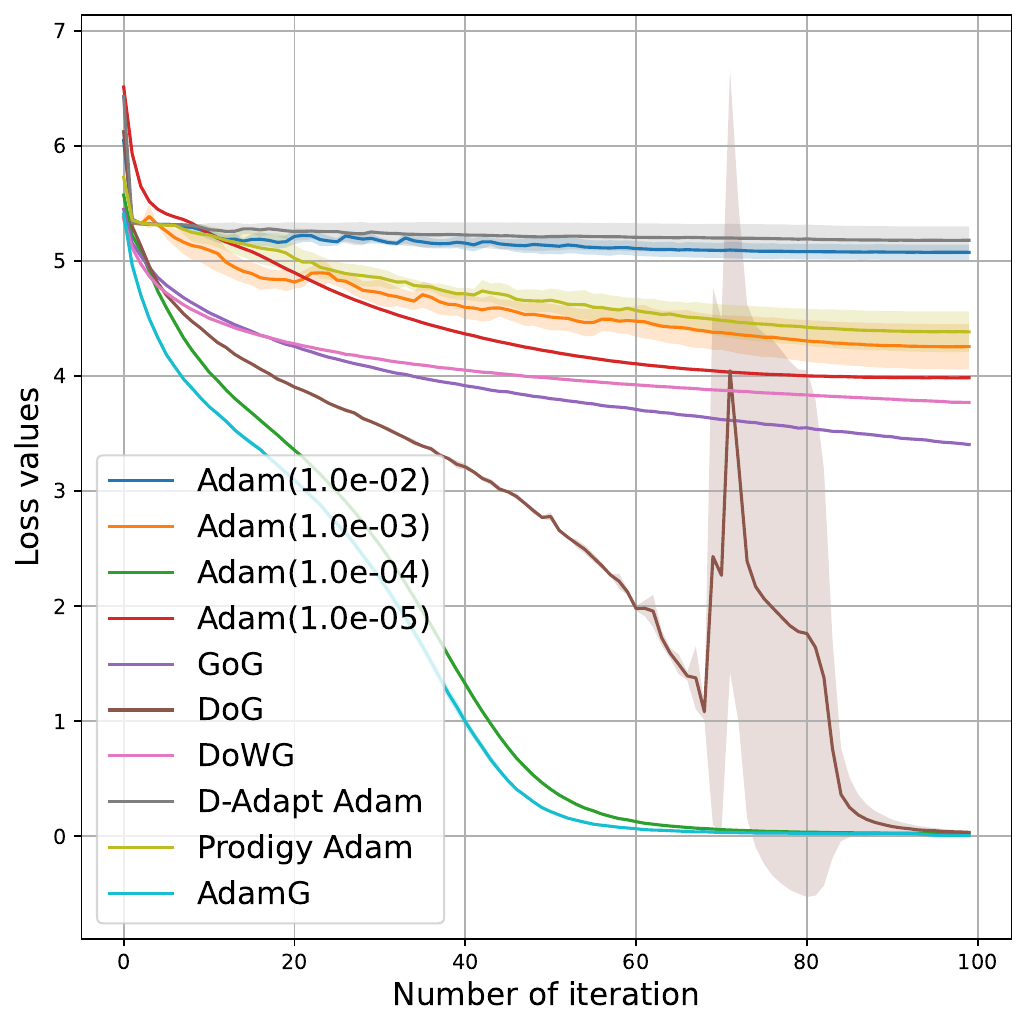}}
\subfigure[VGG11\&R.I.]{\includegraphics[width=0.24\linewidth]{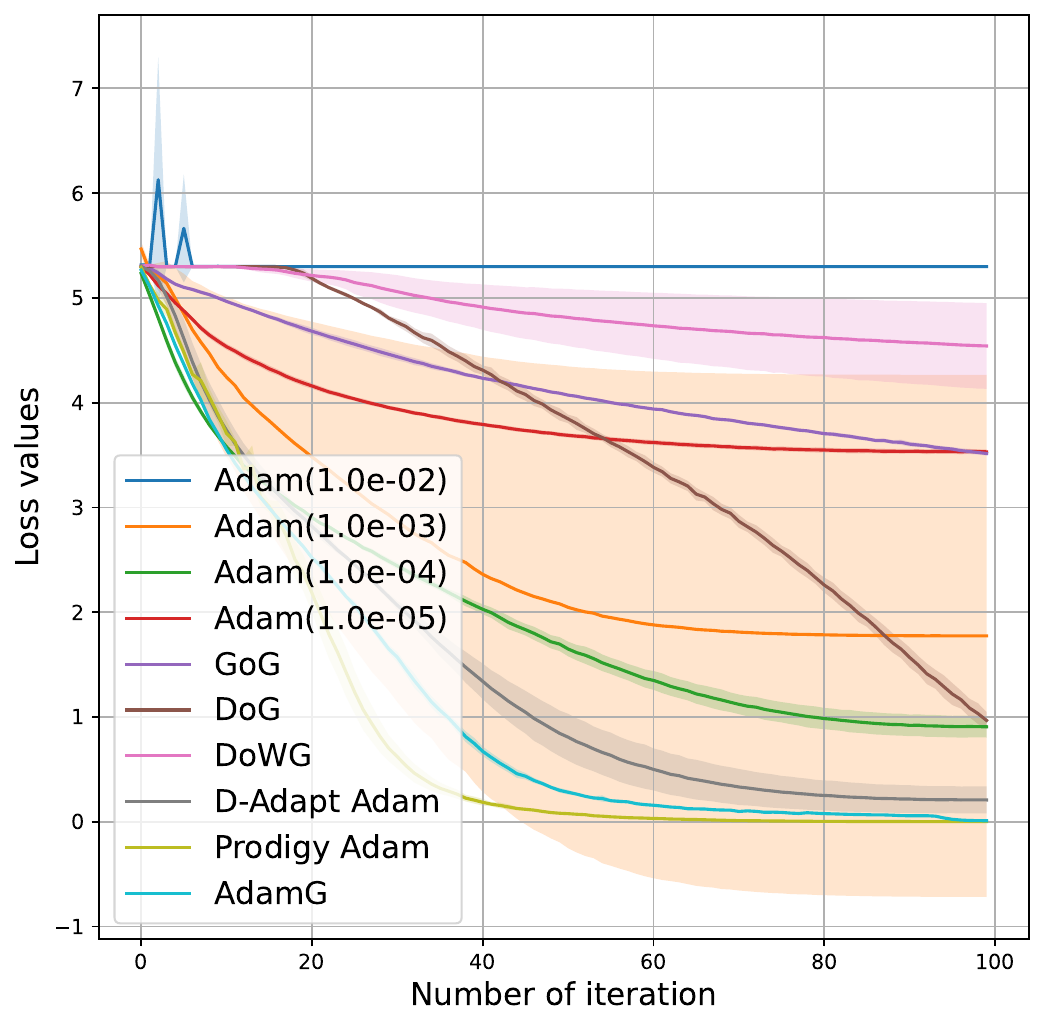}}
\end{center}
\caption{Tiny-ImageNet experiments. R.I. denotes randomly initialized networks.\label{app_fig_TINYIMAGENET}}
\end{figure*}


\begin{figure*}[!ht]
\begin{center}
\subfigure{\includegraphics[width=1.\linewidth]{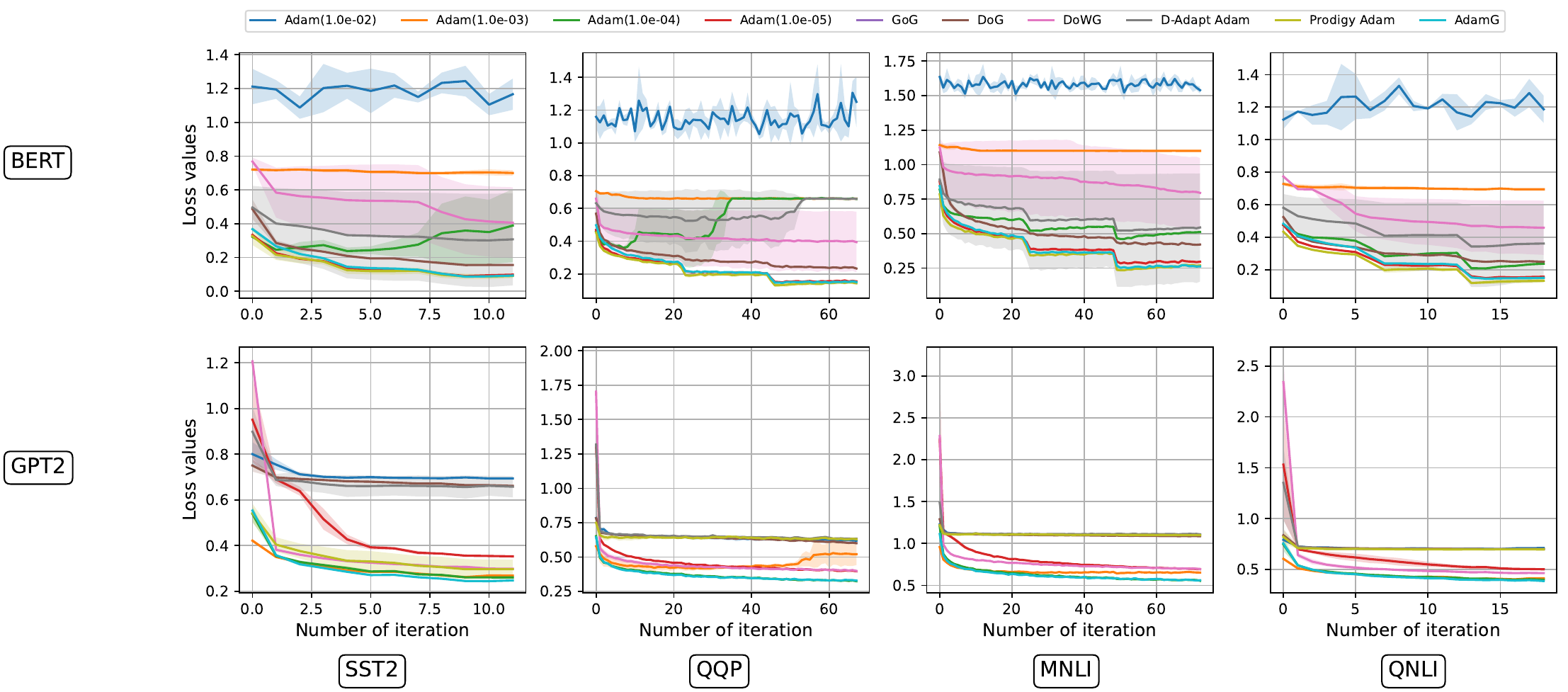}\label{lan_convergence_combined}}
\end{center}
\caption{BERT and GPT-2 under GLUE benchmark experiments.
\label{fig_lan_exp_comb}}
\end{figure*}

\end{document}